\documentclass[10pt,twocolumn,letterpaper]{article}
\usepackage{xcolor}
\usepackage{booktabs}
\usepackage{amsmath, amssymb,amsfonts}
\usepackage{amsthm}
\newtheorem{lemma}{Lemma}

\usepackage{float}
\usepackage{csquotes}
\usepackage{bbding}

\usepackage[pagenumbers]{cvpr} 

\usepackage{float}
 
\definecolor{cvprblue}{rgb}{0.21,0.49,0.74}
\usepackage[pagebackref,breaklinks,colorlinks,allcolors=cvprblue]{hyperref}

\def\paperID{****} 
\def\confName{CVPR}
\def\confYear{2026}

\title{CGL: Advancing Continual GUI Learning via Reinforcement Fine-Tuning}

\newcommand{\equalcontrib}{\textsuperscript{*}}
\newcommand{\corresp}{\textsuperscript{\Envelope}}

\author{
    Zhenquan Yao\textsuperscript{1\equalcontrib}, 
    Zitong Huang\textsuperscript{1\equalcontrib}, 
    Yihan Zeng\textsuperscript{2$\dagger$}, 
    Jianhua Han\textsuperscript{2}, 
    Hang Xu\textsuperscript{2}, 
    Chun-Mei Feng\textsuperscript{3\corresp \equalcontrib} \\
    Jianwei Ma\textsuperscript{1,4\corresp \equalcontrib}, 
    Wangmeng Zuo\textsuperscript{1\corresp \equalcontrib} \thanks{
        \textsuperscript{*} These authors contributed equally to this work. \quad
        \textsuperscript{\Envelope} Corresponding authors.
        \textsuperscript{$\dagger$} Project leader.
    } \\[6pt]
    \textsuperscript{1}Harbin Institute of Technology \quad
    \textsuperscript{2}Huawei Noah's Ark Lab \quad
    \textsuperscript{3}University College Dublin \quad \\ \textsuperscript{4}Peking University \\
    {\small \texttt{25s012039@stu.hit.edu.cn, zitonghuang99@gmail.com}}
}

\begin{document}
\maketitle
\begin{abstract}
Graphical User Interface (GUI) Agents, benefiting from recent advances in multimodal large language models (MLLM), have achieved significant development. However, due to the frequent updates of GUI applications, adapting to new tasks without forgetting old tasks in GUI continual learning remains an open problem.  
In this work, we reveal that while Supervised Fine-Tuning (SFT) facilitates fast adaptation, it often triggers knowledge overwriting, whereas Reinforcement Learning (RL) demonstrates an inherent resilience that shields prior interaction logic from erasure. Based on this insight,
we propose a \textbf{C}ontinual \textbf{G}UI \textbf{L}earning (CGL) framework that dynamically balances adaptation efficiency and skill retention by enhancing the synergy between SFT and RL. 
Specifically, we introduce an SFT proportion adjustment mechanism guided by policy entropy to dynamically control the weight allocation between the SFT and RL training phases.
To resolve explicit gradient interference, we further develop a specialized gradient surgery strategy. By projecting exploratory SFT gradients onto GRPO-based anchor gradients, our method explicitly clips the components of SFT gradients that conflict with GRPO. On top of that, we establish an AndroidControl-CL benchmark, which divides GUI applications into distinct task groups to effectively simulate and evaluate the performance of continual GUI learning. Experimental results demonstrate the effectiveness of our proposed CGL framework across continual learning scenarios. The benchmark, code, and model will be made publicly available.

\end{abstract}    
\section{Introduction}

\begin{figure}[htbp]
\centering
\includegraphics[width=1\linewidth]{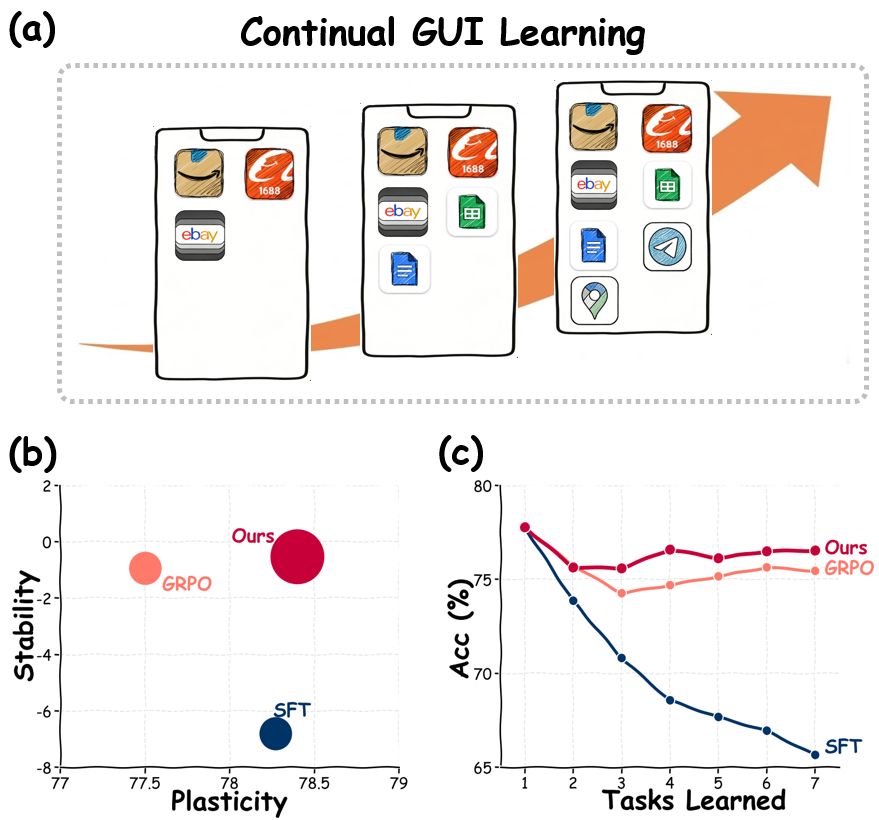}
\vspace{-4mm}
\caption{Overview of Continual GUI Learning (CGL).
(a) GUI environments evolve as new app categories emerge, causing static GUI agents to misalign with the real world. (b) Existing paradigms struggle to balance stability in legacy tasks and plasticity in novel task accuracy. (c) Our CGL framework achieves balanced adaptation and retention.}
\label{fig:setting}
\vspace{-18pt}
\end{figure}
\label{sec:intro}

Graphical User Interface (GUI) agents, powered by Multimodal Large Language Models (MLLMs)~\cite{zhu2025internvl3,bai2025qwen2,wang2024qwen2,li2024llava,chen2024expanding}, have demonstrated significant potential in automating complex interactions within software ecosystems~\cite{hong2024cogagent,luo2025gui,qin2025ui,lu2025ui,bai2024digirl,liu2025infiguiagent}. By interpreting visual semantics and executing hierarchical action sequences, these agents bridge the gap between high-level user intent and low-level interface execution. However, the prevailing training paradigm remains largely static, standing in stark contrast to the highly dynamic evolution of real-world GUI environments~\cite{lin2025showui,cheng2024seeclick,wu2024os-atlas,qin2025ui}.

Driven by frequent UI updates and functional iterations, GUI agents must possess Continual Learning (CL) capabilities, the ability to adapt to novel interfaces and tasks without compromising proficiency in previously mastered domains~\cite{zhu2024anytime,huang2024class2,wang2024learn,meng2025preserving}. However, CL in the GUI domain presents unique challenges distinct from traditional vision or natural language processing~\cite{zhu2024anytime,huang2024class2,wang2024learn}. GUI tasks involve exceptionally long-horizon action dependencies, where the integrity of an entire sequence hinges on precise intermediate executions. As shown in Fig.\ref{fig:setting}, exposing an agent to a sequence of evolving applications often results in a precipitous decline in performance on ancestral tasks. Even minor perturbations, such as a subtle shift in login layout or menu hierarchy, can render established interaction policies entirely obsolete.

Existing methods typically employ either Supervised Fine-Tuning (SFT) or Reinforcement Learning (RL), \ie, GRPO~\cite{guo2025deepseek}, to adapt agents to novel tasks~\cite{luo2025gui, qin2025ui,lu2025ui,hong2024cogagent,bai2024digirl,liu2025infiguiagent}. However, our investigation reveals a divergence between these two optimization paths. 
As shown in Fig.~\ref{fig:llava_adation}, the aggressive gradient updates in SFT forcibly pull the model parameters toward the manifold of the new task, thereby distorting the structural integrity of previously acquired knowledge. 
In contrast, GRPO~\cite{guo2025deepseek} exhibits a distinct inherent resilience. 
It preserves broader behavioral diversity, allowing the agent to optimize for rewards without entirely erasing the underlying interaction logic. 
Nevertheless, this stability is attained at the expense of high sample complexity, resulting in protracted adaptation speeds in unfamiliar environments that fail to meet real-world efficiency requirements.

Motivated by these insights, we propose the \textit{Continual GUI Learning (CGL)} framework, which establishes a synergistic mechanism encompassing conflict quantification and gradient rectification to reconcile the trade-off between adaptation efficiency and skill retention.
Specifically, we first introduce \textit{Error-Aware Routing} to dynamically trigger supervised corrective updates when reinforcement exploration fails. To synchronize knowledge acquisition with retention, we develop \textit{Entropy-Regulated Tuning}, which modulates the weight of the supervised objective based on real-time policy uncertainty. Furthermore, we implement \textit{Gradient Surgery} to project exploratory SFT gradients onto a conflict-free subspace defined by GRPO-based anchor gradients, effectively resolving parameter-level interference between the two objectives.
This projection process enables selective filtering of update directions, preserving only constructive components that are either orthogonal or aligned with established functional logic.
Consequently, CGL empowers the agent with superior exploration efficiency and evolutionary robustness without encroaching upon the \enquote{logical redline} of previously acquired skills.

To rigorously benchmark GUI-CL, we establish AndroidControl-CL, a dataset that partitions GUI tasks into sequential groups to simulate realistic software versioning. Our contributions are summarized as follows:

\begin{itemize}

\item We reveal that while SFT triggers knowledge overwriting, RL demonstrates inherent resilience in preserving prior GUI interaction logic.

\item We propose the CGL framework with entropy-guided SFT balancing and gradient surgery to achieve an effective trade-off between stability and plasticity.

\item We introduce AndroidControl-CL, providing a standardized platform for evaluating agent evolution under realistic distribution shifts.

\item Extensive experiments demonstrate that CGL significantly outperforms state-of-the-art baselines in both cross-domain adaptation speed and the mitigation of catastrophic forgetting.

\end{itemize}

\section{Related Work}

\subsection{Continual Learning}
Traditional continual learning methods are mainly categorized into three types \cite{liu2025continual}: Regularization-based methods \cite{zenke2017continual, li2017learning, aljundi2018memory} constrain critical parameter updates via importance quantification, such as adding regularization terms to protect old-task parameters; they are lightweight but face challenges in importance measurement and loss weight balancing. Dynamic Architecture methods \cite{yoon2017lifelong, rusu2016progressive} freeze old parameters and add task-specific sub-networks, which theoretically avoid forgetting but lead to linear model size expansion with increasing tasks. Rehearsal methods \cite{liu2023augmented, ji2023memorizing} use memory buffers to mix old and new task data during training, and their key challenge lies in balancing buffer size. For Vision-Language Models (VLMs), continual learning faces unique challenges like maintaining cross-modal alignment. Core solutions include Multi-Modal Replay (MMRE) \cite{zhang2023vqacl, chen2023continual, lei2023symbolic, marouf2025no,huang2024learning}: it adopts explicit or implicit replay, where explicit replay stores old image-text pairs in a buffer and implicit replay generates pseudo pairs, with limitations of storage consumption and dependence on pseudo-data quality respectively. Cross-Modal Regularization (CREG) \cite{yu2024select, zhou2025learning, huang2024class} adds constraint terms to retain old cross-modal associations, and its key issue is balancing constraint intensity. Parameter-Efficient Adaptation (PEA) \cite{li2024atlas, hu2022lora, jacobs1991adaptive,dong2024mr} uses lightweight modules without modifying core parameters to preserve zero-shot capability, but may struggle with complex tasks due to limited adjustable parameters. Recent work \cite{lai2025reinforcement} proposes a rollout-based Instance Filtering method (RIF-RFT) to enhance the stability and efficiency of continual post-training in VLMs.

\begin{figure}[t]
\centering
\vspace{-3pt}
\includegraphics[width=1.05\linewidth]{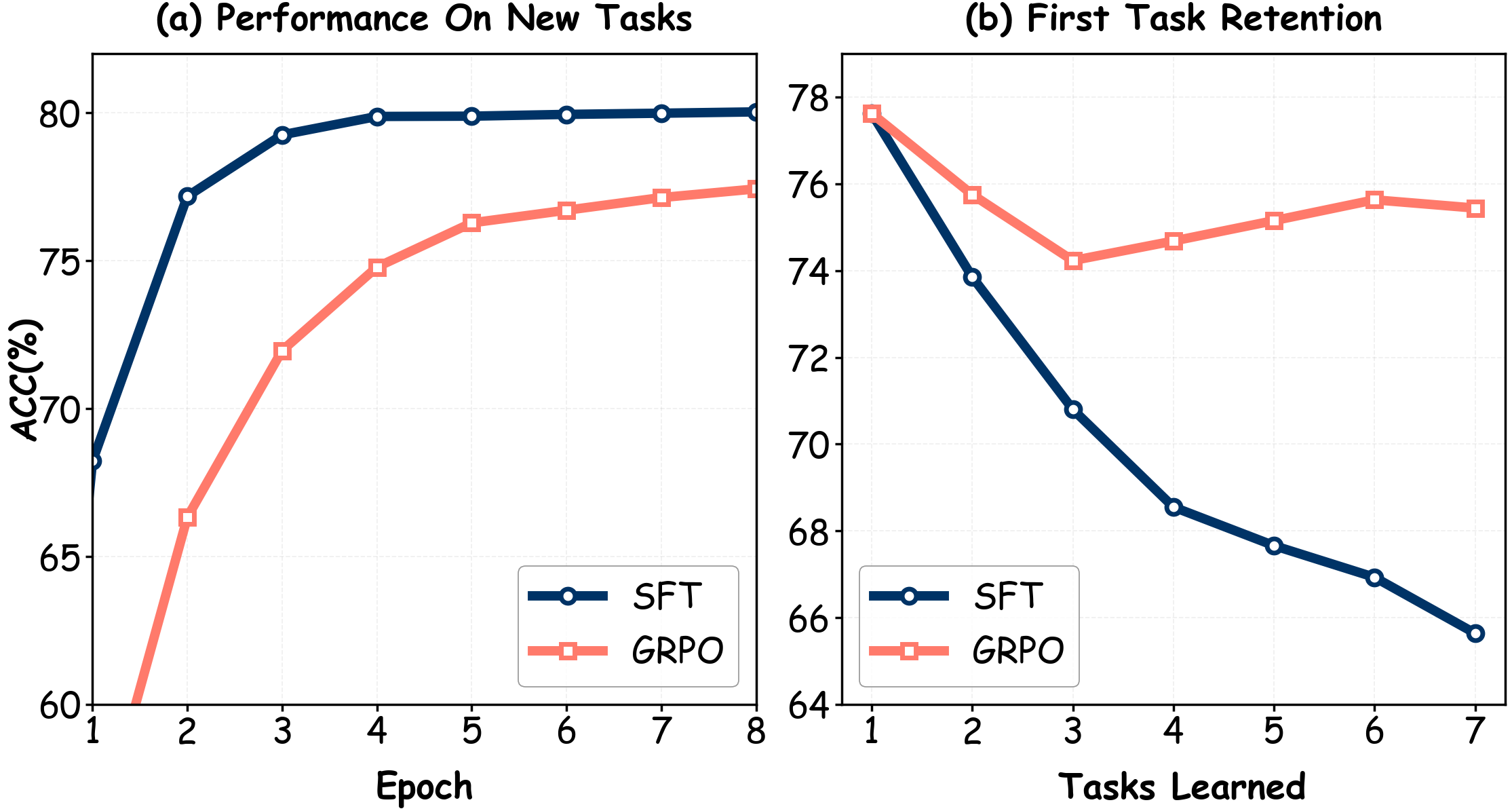}
\vspace{-4mm}
\caption{Preliminary comparison of SFT and GRPO on the LLaVA-OneVision-0.5B model. (a) Adaptation performance on a newly introduced task: SFT demonstrates fast plasticity, whereas GRPO exhibits slower adaptation. (b) Forgetting of the initial task after sequential training: SFT undergoes substantial degradation, while GRPO maintains higher retention.}
\label{fig:llava_adation}
\vspace{-13.5pt}
\end{figure}

\subsection{GUI Agents for Interactive Control}
GUI Agents automate human-like interactions with GUI systems by perceiving UI elements and executing target actions, with existing works primarily falling into two technical paradigms. SFT serves as a dominant data-driven paradigm, where specialized multimodal models are fine-tuned on large-scale labeled GUI corpora ~\cite{lin2025showui,cheng2024seeclick,wu2024os-atlas} to map UI states to actions, as exemplified by UI-TARS ~\cite{qin2025ui} which achieves strong benchmark performance but faces generalization limitations. RL-based GUI Agents, inspired by DeepSeek-R1~\cite{guo2025deepseek}, have gained traction: UI-R1\cite{lu2025ui} and GUI-G1~\cite{zhou2025gui} explores R1-zero-like visual grounding, while GUI-G2~\cite{tang2025gui} introduces Gaussian Reward Modeling and CRAFT-GUI~\cite{nong2025craft} proposes a curriculum-reinforced approach to enhance training stability and accuracy. This work \cite{xu2025exploration} dynamically schedules policy entropy to first increase and then decrease during training, enhancing the robustness of GUI agents against environmental noise. Notably, existing research focuses on static task scenarios without addressing the practical need for continuous adaptation to interface and task changes, while our method specifically targets to fill this gap.
Meanwhile, existing approaches merely rely on either SFT or RL for optimization, overlooking both their respective strengths and the potential of integrating~\cite{liu2026automated}. To the best of our knowledge, our work presents the first analysis of how SFT, RL, and their integration influence model capabilities within the dynamic setting of continual learning for GUI agent.
\section{Task Definition}

We formulate Continual GUI Learning (CGL) as a sequence of tasks where a multimodal agent must sequentially master interaction skills across disjoint application domains while preserving previously acquired capabilities. Formally, let $\mathcal{T} = \{T_1, T_2, \dots, T_N\}$ denote a sequence of $N$ tasks arriving chronologically. Each task $T_k$ comprises a set of mobile applications $\mathcal{A}_k = \{a_{k,1}, a_{k,2}, \dots, a_{k,M_k}\}$, where applications across distinct tasks are strictly non-overlapping: $\mathcal{A}_i \cap \mathcal{A}_j = \emptyset$ for all $i \neq j$. This ensures that each task introduces a novel application domain unseen in previous tasks.

Within each application $a_{k,m} \in \mathcal{A}_k$, the agent interacts through trajectories. A trajectory $\tau = (I, \{v_t, a_t\}_{t=1}^{T})$ consists of:
\begin{itemize}
    \item a natural language instruction $I$ specifying the target goal,
    \item a sequence of visual observations $v_t \in \mathbb{R}^{H \times W \times C}$ representing GUI screenshots at timestep $t$,
    \item a corresponding sequence of actions $a_t \in \mathcal{A}_{\text{GUI}}$ executed by the agent, where $\mathcal{A}_{\text{GUI}}$ denotes the unified action space (\eg coordinate-based taps, swipes, and text inputs).
\end{itemize}
For task $T_k$, we define a training set $\mathcal{D}_k^{\text{train}}$ and a test set $\mathcal{D}_k^{\text{test}}$, both composed of trajectories sampled from applications in $\mathcal{A}_k$.

\paragraph{Continual Learning Protocol}
The learning process follows a sequential paradigm where model parameters $\theta$ are updated iteratively across $\mathcal{T}$. At stage $k$, the agent initializes with weights $\theta_{k-1}$ learned from preceding tasks ($\theta_0$ denotes the initial pre-trained state), then trains exclusively on $\mathcal{D}_k^{\text{train}}$ without any access to historical training data $\{\mathcal{D}_1^{\text{train}}, \dots, \mathcal{D}_{k-1}^{\text{train}}\}$. The optimization aims to obtain updated parameters $\theta_k$ that minimize the current task loss:
\begin{equation}
\theta_k = \arg\min_{\theta} \mathcal{L}(\theta; \mathcal{D}_k^{\text{train}}),
\end{equation}
while simultaneously preserving proficiency on all previously encountered tasks. After training on $T_k$, the agent must generalize to trajectories from \emph{all} applications encountered so far, \ie it is evaluated on the cumulative test set $\bigcup_{i=1}^{k} \mathcal{D}_i^{\text{test}}$. This parameter inheritance mechanism under strict data isolation constraints embodies the core challenge of CGL: accumulating new interaction skills without catastrophically forgetting established GUI capabilities.

\paragraph{Metric.}  We adopt the following three metrics to evaluate the agent's capabilities.
\begin{itemize}
    \item \textbf{Step-wise Accuracy}: For each task, measures the per-step accuracy, \ie the proportion of correctly executed steps across all steps taken.
    \item \textbf{Average Step-wise Accuracy}: The mean value of step-wise accuracy across all evaluated tasks.
    \item \textbf{Trajectory-wise Accuracy}: For each task, measures the proportion of test trajectories in which every step is executed correctly, \ie the ratio of fully correct execution sequences to the total number of trajectories.
    \item \textbf{Average Trajectory-wise Accuracy}: The mean value of trajectory-wise accuracy across all evaluated tasks.
    \item \textbf{Forgetting Measure (FM)}: Quantifies the performance drop on previous tasks after learning new ones. Let $A_{n,k}$ denote the accuracy on task $T_k$ after training on task $T_n$ (where $n \ge k$):
    \begin{equation}
    FM = \frac{1}{N-1} \sum_{k=1}^{N-1} (A_{N,k} - A_{k,k})
    \end{equation}
\end{itemize}

\begin{figure*}[!htbp]
\centering
\vspace{-9mm}
\includegraphics[width=1\linewidth]{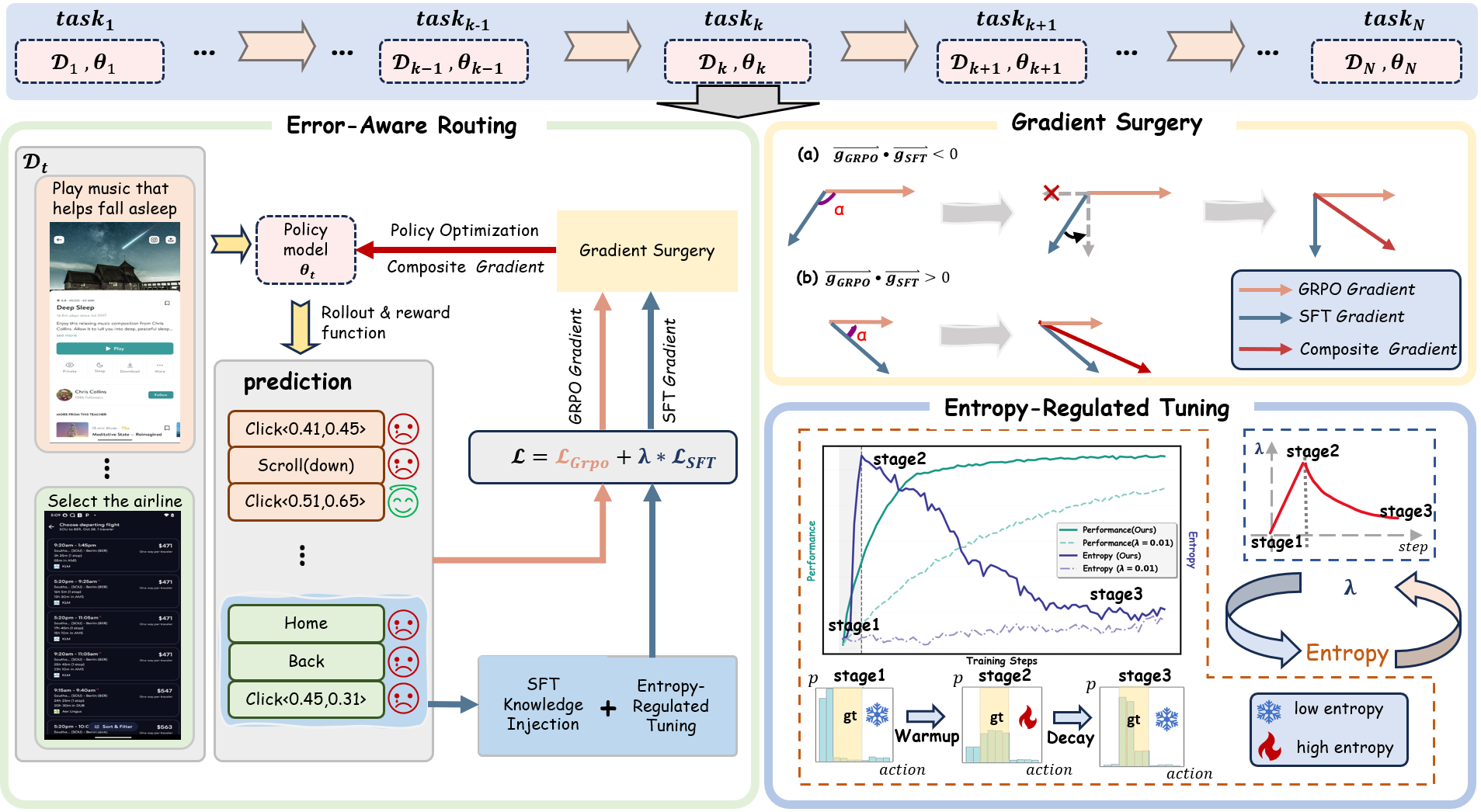}
\vspace{-4mm}
\caption{\textbf{Overview of the proposed CGL framework.} 
\textbf{(Left) Error-Aware Routing pipeline:}It dynamically routes data based on prediction feedback, where erroneous actions trigger SFT knowledge injection to facilitate correction. 
\textbf{(Top Right) Gradient Surgery:} Resolves directional conflicts between GRPO and SFT gradients through projection to maintain optimization stability. 
\textbf{(Bottom Right) Entropy-Regulated Tuning:} A dynamic strategy that manages the exploration-exploitation trade-off by adjusting the SFT weight $\lambda$. The bottom bar charts illustrate the model’s action probability distribution $p$, where the yellow shaded area denotes the ground truth ($gt$) action range.
}
\label{fig:method}
\vspace{-5pt}
\end{figure*}

\begin{table}[t]
    \vspace{-2mm}
    \centering
    \caption{Distribution of AndroidControl-CL. Abbreviations: SP (Shopping), PO (Productivity and Office), CO (Communication), TT (Travel and Transportation), ST (System Tools), ES (Education and Science), LE (Life and Entertainment).}
    \small
    \begin{tabular}{lccccccc}
    \toprule
    \textbf{Categories}       & \textbf{SP} & \textbf{PO} & \textbf{CO} & \textbf{TT} & \textbf{ST} & \textbf{ES} & \textbf{LE} \\
    \midrule
    \textbf{Apps}         & 15          & 15          & 8           & 10          & 13          & 11          & 17          \\
    \textbf{Trajectories}     & 1140        & 1240        & 617         & 686         & 771         & 559         & 1030        \\
    \bottomrule
    \end{tabular}
    \label{tab:androidcontrol-cl-distribution}
    \vspace{-5mm}
\end{table}

\section{AndroidControl-CL Benchmark}
Continual learning for GUI agents is essential in the face of rapidly emerging mobile applications and frequent software updates, yet existing datasets lack dedicated benchmarks tailored to dynamic GUI environments. 
To address this gap, we propose the \textbf{AndroidControl-CL} benchmark, an extension of AndroidControl~\cite{li2024effects} with additional annotations and structured task splits. 
This benchmark is specifically designed for the systematic evaluation of Continual GUI Learning (CGL).




\paragraph{Explicit, Multi-Source App Identity Annotation.}
The original dataset only includes sparse app identifiers inferred from occasional \texttt{open\_app} actions. AndroidControl-CL introduces a compact and robust attribution pipeline that integrates three complementary cues: (1) extracting explicit app identities from \texttt{open\_app} parameters; (2) identifying direct app references in natural-language instructions; and (3) performing screenshot-based app recognition when textual cues are insufficient. This yields clear, instance-level app identity labels.

\paragraph{Functional Super-Class Based Task Splits.}  
Building upon the per-trajectory app annotations, we structure AndroidControl-CL into a sequence of learning tasks based on app-level semantic categories. 
Specifically, we group apps into seven functional super-classes that reflect common mobile application domains: Shopping (SP), Productivity and Office (PO), Communication (CO), Travel and Transportation (TT), System Tools (ST), Education and Science (ES), and Life and Entertainment (LE). 
This design is motivated by the observation that apps within the same super-class tend to exhibit similar task logic, UI layouts, and interaction patterns, whereas those from different super-classes often present substantial distributional and functional gaps. 
In each task, we split the training and test sets at the trajectory level with an 8:2 ratio.
By organizing tasks along these super-class boundaries, we establish a realistic and challenging continual learning setting that evaluates an agent’s ability to incrementally acquire and generalize knowledge across distinct app domains. 

\paragraph{Balanced Task Distribution.}  
After task partitioning, we observed data imbalance across two dimensions: (1) imbalance numbers of apps across tasks, and (2) imbalance numbers of trajectory across apps. 
To ensure a fair evaluation setup, we perform targeted data filtering to balance both the number of apps per task and the number of trajectories per app.
The resulting balanced task distribution is shown in Tab.~\ref{tab:androidcontrol-cl-distribution}.

\paragraph{Fine-Grained Bounding Box Annotation for Click Actions.}  
In the original AndroidControl dataset, click actions are annotated only as single-point coordinates.
This fails to capture the realistic nature of GUI interactions, where a user typically targets a UI element (e.g., a button or icon) that occupies a spatial region rather than an infinitesimal point.
To better reflect this, we enhance the annotation by assigning each click action a bounding box that delineates the corresponding interactive UI element.
This is achieved through a designed three-stage pipeline: (1) automatic UI element parsing using OmniParser~\cite{lu2024omniparser} to extract candidate bounding boxes from screenshots; (2) matching the original annotated click coordinate to the nearest parsed UI element to obtain an initial bounding box assignment; and (3) manual verification and correction to resolve alignment errors or ambiguous cases. 
The resulting region-level annotations provide more semantically meaningful supervision, enabling GUI agents to learn grounded, element-aware action policies with higher fidelity.
\section{Method}
Following the established paradigm for GUI agents, our CGL framework is built upon Multimodal Large Language Models (MLLMs), such as LLaVA~\cite{li2024llava} or Qwen-VL~\cite{bai2025qwen2,wang2024qwen2}. In this setting, the agent functions as a policy network $\pi_\theta$ that operates over a sequential decision-making process. At each time step $t$, the model perceives a visual input consisting of the current screenshot $v_t$ and a global human instruction $I$ in natural language. The agent then processes these multimodal inputs to predict the next action $a_t$ in a text-based format, which is subsequently parsed into executable GUI commands. This formulation allows the agent to leverage the strong cross-modal capabilities of pre-trained MLLMs for interface navigation.
In preliminary experiments illustrated in Fig.~\ref{fig:llava_adation}, we identified complementary strengths between two training paradigms for GUI agents. \textbf{Supervised Fine-Tuning (SFT)} facilitates the efficient acquisition of new knowledge but remains susceptible to the catastrophic forgetting of established tasks. Conversely, \textbf{Reinforcement Learning (RL)} demonstrates superior retention of legacy knowledge but frequently fails to converge on new interaction patterns when rewards are sparse. 

To capitalize on these distinct advantages, we propose a joint training framework that integrates SFT with Group Relative Policy Optimization (GRPO) \cite{guo2025deepseek} illustrated in Fig.~\ref{fig:method}. 
Our approach is structured around three core modules designed to stabilize the training process and enhance the agent's adaptability. Specifically, \textbf{Error-Aware Routing} addresses the reward sparsity problem by dynamically invoking SFT demonstrations when RL exploration fails. \textbf{Entropy-Regulated Tuning} modulates the trade-off between exploration and exploitation by adjusting the weighting factor $\lambda$ based on policy uncertainty. Finally, \textbf{Gradient Surgery} resolves directional conflicts between the exploratory SFT updates and the stabilizing GRPO updates to ensure long-term knowledge retention.
The global optimization objective of our framework is formulated as a weighted combination of the two losses:
\begin{equation}
\mathcal{L} = \mathcal{L}_{\text{GRPO}} + \lambda(\mathcal{H}, \text{step}) \cdot \mathcal{L}_{\text{SFT}},
\end{equation}
where $\lambda$ is a function determined by policy entropy $\mathcal{H}$ and training step.

\subsection{Error-Aware Routing}
To achieve anti-forgetting in continual GUI learning, we adopt GRPO~\cite{guo2025deepseek} as the RL algorithm, leveraging its group-based relative advantage calculation and constrained policy update. For each GUI query \( q \) (screenshot,task instruction,and historical actions), we sample \( G \) trajectories from the old policy \( \pi_{\theta_{\text{old}}} \), computingnormalized advantages for each trajectory as:$A_{i,t} = \frac{r_i - \mu_r}{\sigma_r + \epsilon}$, where \( \mu_r \) and \( \sigma_r \) are the mean and standard deviation of rewards across the trajectory group, respectively. The details of the reward function are presented in Sup.\ref{reward_ui}. GRPO optimizes a constrained objective to stabilize policy updates:

\begin{equation} \label{eq:grpo}
\begin{split}
\mathcal{L}_{\text{GRPO}}(\theta) &=- \mathbb{E}\left[ q \sim P(Q), \{o_i\}_{i=1}^G \sim \pi_{\theta_{\text{old}}}(O|q) \right] \\
&\quad \left[ \frac{1}{G} \sum_{i=1}^G \frac{1}{|o_i|}\sum_{t=1}^{|o_i|} \min\bigg( \frac{\pi_\theta(o_{i,t}|q,o_{i,<t})}{\pi_{\theta_{\text{old}}}(o_{i,t}|q,o_{i,<t})} A_{i,t}, \right. \\
&\quad \left. \text{clip}\bigg( \frac{\pi_\theta(o_{i,t}|q,o_{i,<t})}{\pi_{\theta_{\text{old}}}(o_{i,t}|q,o_{i,<t})}, 1-\epsilon, 1+\epsilon \bigg) A_{i,t} \bigg) \right] \\
&\quad \left. - \beta \cdot \text{KL}(\pi_\theta \parallel \pi_{\text{ref}}) \right]
\end{split}
\end{equation} 

However, relying solely on GRPO for GUI agent training faces a critical limitation: when the policy has not yet acquired knowledge about a novel interaction pattern, all rollouts for a given instruction remain incorrect throughout execution. Since none of these trajectories achieve the full  reward, the relative advantage estimation degenerates—providing no discriminative signal to guide the policy toward correct behaviors. Consequently, the agent becomes trapped in unproductive exploration, severely hindering its ability to acquire new GUI skills.

To break this deadlock, we introduce an \textbf{Error-Aware Routing} mechanism that dynamically injects supervised guidance when RL signals become ineffective. Specifically, for each instruction $I$, we examine the rewards of all $K$ sampled rollouts $\{\tau_k\}_{k=1}^K$. If the maximum reward falls short of the ideal full score (i.e., $\max_k r(\tau_k) < r_{\text{max}}$), we interpret this as evidence that the policy cannot autonomously discover the correct solution. In such cases, we route the update to a supervised fine-tuning (SFT) step using the ground-truth demonstration $\tau^* = (I, \{v_t^*, a_t^*\}_{t=1}^{T^*})$, optimizing the likelihood:
\begin{equation} \label{eq:sft_total}
\mathcal{L}_{\text{SFT}} = -\frac{1}{|o^*|}\sum_{t=1}^{|o^*|} \log \pi_\theta\left(o_t^* \mid s, o_{<t}^*\right).
\end{equation}
Particularlly, for spatial actions, we employ an augmented SFT loss by sampling $G$ valid points within the target bounding box to enhance spatial generalization. This routing strategy ensures that the model only utilizes SFT to rectify "pathological" biases where GRPO exploration fails to find a successful path. 

\subsection{Entropy-Regulated Tuning}
\label{sec:entropy}

To manage the transition from exploration to exploitation across sequential tasks, we introduce a dynamic weighting factor $\lambda(\mathcal{H}, \text{step})$ to modulate the contribution of SFT.
The policy entropy, which measures the uncertainty of the agent's action distribution, is defined as $\mathcal{H}(\pi_{\theta}(\cdot|s)) = - \sum_{a \in \mathcal{A}} \pi_{\theta}(a|s) \log \pi_{\theta}(a|s)$,where $\mathcal{A}$ is the action space of the GUI agent. 

The mechanics of our collaborative tuning are rooted in the first-order dynamics of this entropy. Let $\mathbf{z}$ denote the policy logits such that $\pi_{\theta}(a|s) = \text{softmax}(z_a)$. An optimization step induces a logit update $\Delta z_a = \eta \nabla_{z_a} \mathcal{L}$, where $\eta$ is the learning rate. As derived in our theoretical analysis (see Sup.\ref{sec:theory}), the resulting change in entropy $\Delta \mathcal{H}$ can be approximated by the negative covariance between the current log-probabilities and the logit updates:
\begin{equation}
\label{eq:entropy_dynamics}
\Delta \mathcal{H} \approx - \text{Cov}_{a \sim \pi_{\theta}} \left( \log \pi_{\theta}(a|s), \Delta z_a \right).
\end{equation}
Eq.~\eqref{eq:entropy_dynamics} suggests that an update $\Delta z_a$ positively correlated with the current distribution reinforces confident actions and reduces entropy; conversely, an update that opposes the current bias injects entropy into the policy. Our strategy operates in two distinct phases based on this principle:

\textbf{Phase 1: Entropy Injection (Warmup).} During the initial $\text{step}_w$, we linearly increase $\lambda$ to its maximum. In this stage, the model typically suffers from a \textit{pathological bias} toward incorrect actions, where the probability of the ground-truth action $a^*$ vanishes ($\pi(a^*) \to 0$). The SFT update, $\Delta z_a^{\text{SFT}} \propto (\mathbb{I}[a=a^*] - \pi_{\theta}(a|s))$, assigns a large positive update to the low-probability target $a^*$ and negative updates to the high-probability erroneous actions. This creates a strong \textbf{negative covariance}, acting as an \textbf{entropy injector} ($\Delta \mathcal{H}_{\text{SFT}} > 0$). This process "heats up" the distribution, breaking local minima and forcing the agent to explore the ground-truth space.

\textbf{Phase 2: Entropy Decay (Convergence).} Once basic task competency is established, we trigger a decay phase where $\lambda$ follows an exponential function of $\mathcal{H}$:
\begin{equation} \label{eq:lambda_decay}
\lambda(\mathcal{H}) = (\lambda_{\max} - \lambda_{\min}) \min\left(1, ke^{\gamma \mathcal{H}}\right) + \lambda_{\min}.
\end{equation}
In this phase, GRPO updates ($\Delta z_a^{\text{GRPO}} \propto \pi_{\theta}(a|s) A(s,a)$) dominate the optimization. Due to the \textbf{Matthew Effect} (see Sup.\ref{Matthew}), GRPO reinforces actions that already possess high probabilities and positive advantages. This induces a \textbf{positive covariance} that drives entropy down ($\Delta \mathcal{H}_{\text{GRPO}} < 0$). By decaying $\lambda$ as $\mathcal{H}$ drops, we ensure that the SFT component does not interfere with GRPO's precise convergence, allowing the model to solidify its knowledge for long-term retention.

\subsection{Conditional Gradient Surgery for Conflict Resolution}
During collaborative training of Supervised Fine-Tuning (SFT) and GRPO (a reinforcement learning-based objective), gradient conflicts between the two losses can destabilize optimization and impair knowledge preservation. To address this, we introduce a \textit{conditional} gradient surgery strategy: the SFT gradient is modified \textit{only when} it conflicts with the GRPO gradient’s anti-forgetting direction; otherwise, it remains unchanged.

\paragraph{Conflict Detection Criterion}
We define a conflict as occurring when the cosine similarity between the two gradients is negative (i.e., the angle exceeds $90^\circ$):
\begin{equation} \label{eq:conflict_criterion}
\cos\alpha = 
\frac{
\nabla_\theta \mathcal{L}_{\text{SFT}} \cdot \nabla_\theta \mathcal{L}_{\text{GRPO}}
}{
\|\nabla_\theta \mathcal{L}_{\text{SFT}}\|_2 \, \|\nabla_\theta \mathcal{L}_{\text{GRPO}}\|_2
} < 0.
\end{equation}
If $\cos\alpha \geq 0$, the gradients are deemed compatible, and the original SFT gradient is used directly.

\paragraph{Orthogonal Projection under Conflict}
When a conflict is detected ($\cos\alpha < 0$), we surgically remove the component of the SFT gradient that opposes the GRPO direction. Specifically, we retain only the part of $\nabla_\theta \mathcal{L}_{\text{SFT}}$ orthogonal to $\nabla_\theta \mathcal{L}_{\text{GRPO}}$:
\begin{equation} \label{eq:projected_gradient}
\nabla_\theta \mathcal{L}_{\text{SFT}^*} = \nabla_\theta \mathcal{L}_{\text{SFT}} - \nabla_\parallel,
\end{equation}
where the conflicting parallel component is:
\begin{equation} \label{eq:parallel_component}
\nabla_\parallel = 
\left(
\frac{
\nabla_\theta \mathcal{L}_{\text{SFT}} \cdot \nabla_\theta \mathcal{L}_{\text{GRPO}}
}{
\|\nabla_\theta \mathcal{L}_{\text{GRPO}}\|_2^2
}
\right)
\nabla_\theta \mathcal{L}_{\text{GRPO}}.
\end{equation}
This projection eliminates the adversarial update direction while preserving all SFT information orthogonal to GRPO’s objective. The final update gradient for SFT is thus:
\[
\nabla_\theta \mathcal{L}_{\text{SFT}}^{\text{final}} = 
\begin{cases}
\nabla_\theta \mathcal{L}_{\text{SFT}^*}, & \text{if } \cos\alpha < 0 \quad \text{(conflict)}, \\
\nabla_\theta \mathcal{L}_{\text{SFT}}, & \text{otherwise} \quad \text{(no conflict)}.
\end{cases}
\]

\section{Experiment}

\begin{table*}[t]
    \centering
    \vspace{-9mm}
     \caption{Performance comparison of LLava-onsevision-0.5b under Task Order 1. All accuracy values are reported as percentages (\%).}
     \vspace{-4mm}
    \resizebox{\linewidth}{!}{
    \begin{tabular}{lccccccc|cccc}
    \toprule
    \textbf{Method} & \textbf{Shopping} & \textbf{Productivity} & \textbf{Communication} & \textbf{Travel} & \textbf{Tools} & \textbf{Education} & \textbf{Entertainment} & \textbf{Avg. Step-Acc.} & \textbf{Avg. Trajectory Acc.} & \textbf{Avg. FM} \\
    \midrule
    SFT                               &65.64          & 74.73          & 74.66          & 72.00          & 77.39          & 64.12          & 80.08          & 72.66           &  14.61 & -6.81 \\
    SFT+KL                                  & 74.16 &75.23 & 76.89 & 75.49   & 82.06   &   66.01   &  79.29  & 75.59 &   22.39    & -1.78\\
    SFT+Replay                              &70.88          & 76.43          & 77.23          & 74.19          & 78.21          & 64.37          & \textbf{80.63}          & 74.56           &  20.07 & -4.69 \\
    GRPO\cite{guo2025deepseek}                              & 75.44          & \underline{78.86}          & \underline{78.01}          & \underline{76.41}          & \underline{81.57}          & \underline{67.93}          & 77.80          & \underline{76.57}         & \underline{23.85} & -0.93 \\
    RIF-RFT\cite{lai2025reinforcement}                      & \textbf{76.53}          & 74.19          & 75.99          & 73.55          &81.40           & 65.91          & 75.34          & 74.40          &  22.17          &   \underline{-0.88}             \\
    Ours                                   & \underline{75.55}          & \textbf{81.32}          & \textbf{80.73}          & \textbf{76.88}          & \textbf{83.41}          & \textbf{68.90}          & \underline{78.13}          & \textbf{77.84}      & \textbf{24.77}   &   \textbf{-0.52}\\
    \midrule
    SFT-Joint-Training                & 77.83          & 84.46          & 83.24          & 78.41          & 84.76          & 69.73          & 79.58          & 79.69            & 28.63     & -                \\
    Zero-Shot                & -          & -          & -          & -          & -          & -          & -          & -            & -     & -                \\
    \bottomrule
    \end{tabular}}
    \label{tab:method-task-performance-llava-order1}
\end{table*}

\begin{table*}[t]
    \centering
    \vspace{-2mm}
    \caption{Performance comparison of QwenVL2.5-3b under Task Order 1. All accuracy values are reported as percentages (\%). }
    \vspace{-4mm}
    \resizebox{\linewidth}{!}{
    \begin{tabular}{lccccccc|cccc}
    \toprule
    \textbf{Method} & \textbf{Shopping} & \textbf{Productivity} & \textbf{Communication} & \textbf{Travel} & \textbf{Tools} & \textbf{Education} & \textbf{Entertainment} & \textbf{Avg. Step Acc.} & \textbf{Avg. Trajectory Acc.} & \textbf{Avg.FM} \\
    \midrule
    SFT                          & 70.54          & 79.52          & 78.90          & 76.41          & 83.24          & 69.24          & 80.42          & 76.90          & 23.53                & -5.73          \\
    SFT+KL                       & 79.30          & 83.85          & 82.32          & 79.84          & 85.83          & 74.08          & 80.71          & 80.84          & 34.69                & -1.01                \\
    SFT+Replay                   & 74.61          & 83.35          & 81.66          & 79.02          & 85.61          & 72.26          & \underline{82.08}          & 79.80          & 30.19                & -3.11                \\ 
    GRPO\cite{guo2025deepseek}                         & \underline{79.63}          & \underline{84.74}          & \textbf{83.43}          & \underline{81.34}          & 85.74          & \underline{74.08}          & 81.73          & \underline{81.53}          & \underline{36.78}           & -0.62           \\
    RIF-RFT\cite{lai2025reinforcement}                      & 79.53          & 83.35          & 80.77          & 80.25          & \underline{85.94}          & 73.00          & 80.25          & 80.44          & 32.91           & \underline{-0.58}                \\
    Ours                         & \textbf{80.32}          & \textbf{85.46} & 81.99          & \textbf{82.03} & \textbf{88.17} & \textbf{75.81} & \textbf{82.53} & \textbf{82.33} & \textbf{38.03} & \textbf{-0.02}                \\
    \midrule
    SFT-Joint-Training               & 81.73 & 86.74 & 85.75 & 82.24 & 88.17 & 77.00 & 82.76 & 83.48 & 41.66           & -                \\
    Zero-Shot                & 48.02          & 57.71          & 49.50          & 52.81          & 55.87          & 42.33          & 48.86          & 50.73            & 1.86     & -                \\
    \bottomrule
    \end{tabular}}
    \vspace{-6mm}
    \label{tab:method-task-performance-qwenvl-order1}
\end{table*}

\begin{table*}[t]
    \centering
    \vspace{-9mm}  
    \caption{Model performance comparison across different task orders.}
    \vspace{-4mm}  
    \resizebox{\linewidth}{!}{
    \begin{tabular}{lccccccccc}
    \toprule
    \textbf{Model} & \multicolumn{3}{c}{\textbf{Task Order 1}} & \multicolumn{3}{c}{\textbf{Task Order 2}} & \multicolumn{3}{c}{\textbf{Task Order 3}} \\
    \cmidrule(lr){2-4} \cmidrule(lr){5-7} \cmidrule(lr){8-10}
    & Step Acc. (\%) & Trajectory Acc. (\%) & FM & Step Acc. (\%) & Trajectory Acc. (\%) & FM & Step Acc. (\%) & Trajectory Acc. (\%) & FM \\
    \midrule
    SFT           & 76.90 & 23.53 & -5.73 & 77.21 & 23.46 & -5.33 & 76.87 & 23.69 & -6.03 \\
    SFT+KL         & 80.84 & 34.69 & -1.01  & 80.77  & 34.33 & -0.96 & 80.69 & 34.13 & -1.13 \\
    SFT+Replay    & 79.80 & 27.19 & -3.11 & 80.53 & 28.06 & -2.70 & 80.39 & 27.34 & -3.29 \\
    GRPO\cite{guo2025deepseek}          & \underline{81.53} & \underline{36.78} & -0.62 & \underline{81.75} & \underline{36.60} & \underline{-0.33} & \underline{81.44} & \underline{36.72} & \underline{-0.52} \\
    RIF-RFT\cite{lai2025reinforcement}       & 80.44 & 32.91 & \underline{-0.58} & 80.50 & 33.02 & -0.46 & 80.35 & 31.88 & -0.55 \\
    Ours          & \textbf{82.33} & \textbf{38.03} & \textbf{-0.02} & \textbf{82.61} & \textbf{38.66} & \textbf{+0.13} & \textbf{82.40} & \textbf{38.57} & \textbf{-0.06} \\
    \bottomrule
    \end{tabular}}
    \label{tab:cross-order-performance}
    \vspace{-3mm}  
\end{table*}


\begin{table}[t]  
    \centering
    \caption{Ablation study of modules.}
    \vspace{-4mm}
    \scriptsize  
    \renewcommand{\arraystretch}{0.8}  
    \setlength{\tabcolsep}{3pt}  
    \setlength{\cmidrulewidth}{0.6pt}  
    \begin{tabular}{lccccccccc}  
    \toprule
    \multicolumn{6}{c}{\textbf{Modules}} & \multicolumn{3}{c}{\textbf{Metrics}} \\
    \cmidrule(lr){1-6} \cmidrule(lr){7-9}
    \textbf{SFT} & \textbf{GRPO} & \textbf{KL} & \textbf{D-SFT} & \textbf{D-$\lambda$} & \textbf{G-Surg} & \textbf{Step-Acc} & \textbf{Traj-Acc} & \textbf{FM} \\
    \midrule
    $\surd$ &  &  &  &  &  & 76.90 & 23.53 & -5.73 \\
    $\surd$ &  & $\surd$ &  &  &  & 80.84 & 34.69 & -1.01 \\
    & $\surd$ & $\surd$ &  &  &  & 81.53 & 36.78 & -0.62 \\
    & $\surd$ &  &  &  &  & -- & -- & -- \\
    $\surd$ & $\surd$ & $\surd$ &  &  &  & 81.68 & 36.79 & -0.57 \\
    $\surd$ & $\surd$ & $\surd$ & $\surd$ &  &  & 81.90 & 37.12 & -0.33 \\
    $\surd$ & $\surd$ & $\surd$ & $\surd$ &  & $\surd$ & 82.10 & 37.57 & \underline{-0.05} \\
    $\surd$ & $\surd$ & $\surd$ & $\surd$ & $\surd$ &  & \underline{82.23} & \underline{37.62} & -0.07 \\
    $\surd$ & $\surd$ & $\surd$ & $\surd$ & $\surd$ & $\surd$ & \textbf{82.33} & \textbf{38.03} & \textbf{-0.02} \\
    \bottomrule
    \end{tabular}
    \vspace{-3mm}  
    \label{tab:ablation-study}
\end{table}

\begin{figure*}[htbp]
\centering
\vspace{-1mm}
\includegraphics[width=0.75\linewidth]{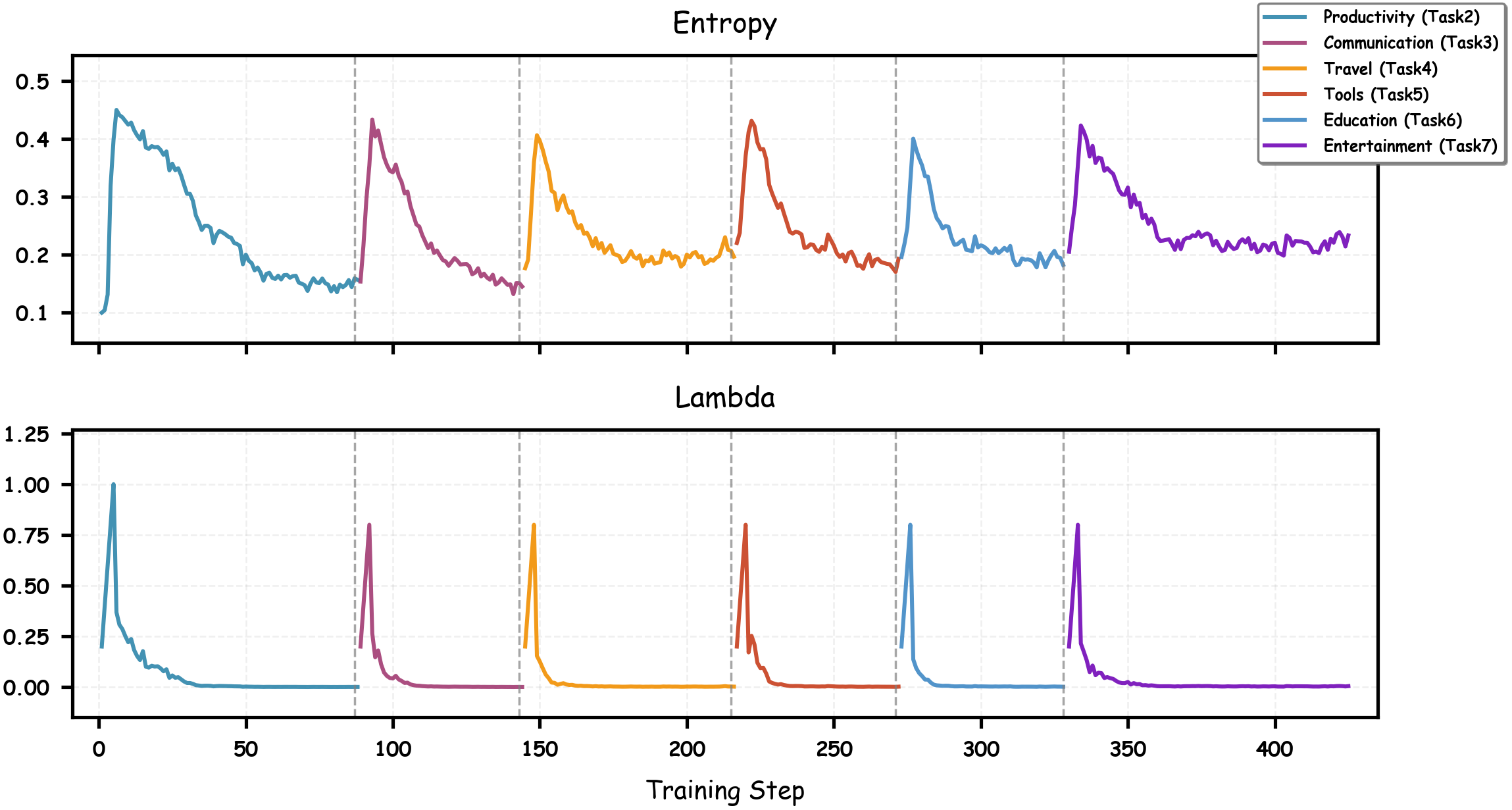}
\vspace{-2mm}
\caption{Trends of policy entropy (top) and SFT loss weight $\lambda$ (bottom) across training steps for Entropy-Regulated Tuning.}

\label{fig:experiment}
\vspace{-10pt}
\end{figure*}

\subsection{Experimental Setup}
\paragraph{Models}
We adopt two MLLMs with different scales as our base models: \textit{QwenVL2.5-3b-Instruct} \cite{bai2025qwen2} and \textit{LLaVA-OneVision-0.5b} \cite{li2024llava}. These models are selected to validate the generality of our method across models with varying capacities.
\vspace{-5mm}
\paragraph{Dataset}
The dataset used in our experiments is the self-proposed \textit{Android-CL} dataset, which is partitioned into 7 app-related tasks. The task categories include Shopping, Communication, Productivity, Travel, System Tools, Education And Science, and Life And Entertainment.
To ensure fair and reliable continual-learning evaluation, we define three training orders representing diverse sequential task permutations:

\begin{enumerate}
    \item Order 1: SP $\rightarrow$ PO $\rightarrow$ CO $\rightarrow$ TT $\rightarrow$ ST $\rightarrow$ ES $\rightarrow$ LE
    \item Order 2: SP $\rightarrow$ LE $\rightarrow$ ES $\rightarrow$ ST $\rightarrow$ TT $\rightarrow$ CO $\rightarrow$ PO
    \item Order 3: ES $\rightarrow$ LE $\rightarrow$ TT $\rightarrow$ SP $\rightarrow$ CO $\rightarrow$ PO $\rightarrow$ ST
\end{enumerate}

\vspace{-5mm}
\paragraph{Implementation Details}
For fair comparison, all methods begin with SFT on the initial task to establish a shared baseline. Subsequent tasks are learned sequentially under their respective continual-learning strategies. Evaluation on the full test sets of all seven tasks assesses both adaptation and forgetting. This design is consistent with existing GUI-agent training pipelines, where many approaches either use SFT alone or apply RL after SFT \cite{luo2025gui, qin2025ui,lu2025ui,hong2024cogagent,bai2024digirl,liu2025infiguiagent}.
All experiments are conducted with PyTorch on 8×Ascend 910B NPUs. For SFT and SFT with replay of histrocal data experiments, we adopt the ms-swift framework \cite{zhao2025swift} with a batch size of 16 and a learning rate of $10^{-5}$. For GRPO \cite{guo2025deepseek}, our proposed CGL method, and RIF-RFT \cite{lai2025reinforcement}, all are implemented based on the verl framework \cite{sheng2025hybridflow} with a batch size of 16, rollout batch size of 512, a learning rate of $10^{-6}$, a KL-divergence coefficient of 0.01, and a sampling group size of 8. For our CGL method, the hyperparameters are set as: $\lambda_{\text{max}} = 1$, $\lambda_{\text{min}} = 0$, $H_{\text{max}} = 0.45$, $\text{step}_{\text{w}} = 5$, $\gamma=20$ and $k = e^{-10}$

\subsection{Main Results}
In the \textbf{Method} section, we propose the \textit{CGL} method, which synergistically integrates SFT and GRPO to balance new-task adaptability and old-task anti-forgetting — a critical challenge for GUI agents that remains underexplored in existing literature. And \textit{RIF-RFT} \cite{lai2025reinforcement} is a recently proposed framework relevant to large-model continual post-training. To comprehensively validate CGL, we conduct experiments using two vision-language models as backbones: lightweight \textit{LLaVA-OneVision-0.5b} \cite{li2024llava} and large-scale \textit{QwenVL2.5-3b-Instruct} \cite{bai2025qwen2}, with \textbf{five baselines} covering diverse technical paradigms:
\begin{itemize}
    \item \textit{SFT}: Standard supervised fine-tuning
    \item \textit{SFT+KL}: SFT augmented with KL-divergence constraints which is same as grpo;
    \item \textit{SFT+Replay}: SFT augmented with 5\% historical data replay for each new task;
    \item \textit{RIF-RFT} \cite{lai2025reinforcement}: A recently proposed data-efficient continual post-training framework;
    \item \textit{GRPO} \cite{guo2025deepseek}: Constrained reinforcement learning for sequential policy updates.
\end{itemize}

\subsubsection{Performance under Task Order 1}
Tab.\ref{tab:method-task-performance-llava-order1}  and Tab.\ref{tab:method-task-performance-qwenvl-order1}  present results under \textbf{Task Order 1}. Consistent conclusions across model scales validate CGL’s superiority in accuracy and anti-forgetting:
\begin{itemize}
    \item \textbf{State-of-the-Art Accuracy}:
    CGL achieves the highest average Step-Acc. and Trajectory-Acc. for both models:
    \begin{itemize}
        \item \textit{QwenVL2.5-3b-Instruct}: 82.33\% Step-Acc. and 38.03\% Trajectory-Acc., outperforming SFT (76.90\%), SFT+KL (80.84\%), SFT+Replay (79.80\%), GRPO (81.53\%), and RIF-RFT (80.44\%);
        \item \textit{LLaVA-OV-One-Vision-0.5b}: 77.84\% Step-Acc. and 24.77\% Trajectory-Acc., surpassing all baselines by 1.27–5.18 percentage points (pp) in Step-Acc.
    \end{itemize}

    \item \textbf{Optimal Anti-Forgetting (Minimal FM)}:
    CGL maintains the smallest FM for both backbones:
    \begin{itemize}
        \item \textit{QwenVL2.5-3b-Instruct}: Near-zero FM (-0.02), reducing forgetting by 0.99 pp (vs. SFT+KL: -1.01) and 0.60 pp (vs. GRPO: -0.62), with negligible old-task knowledge loss;
        \item \textit{LLaVA-OV-One-Vision-0.5b}: FM (-0.52), significantly lower than baselines (SFT: -6.81, GRPO: -0.93, etc.).
    \end{itemize}
\end{itemize}

\subsubsection{Robustness Across Multiple Task Orders}
To evaluate the robustness of our method to task sequence variations, Tab.~\ref{tab:cross-order-performance} presents results across three distinct task orders.
\begin{itemize}
    \item CGL consistently leads in average Step Accuracy (\(\geq 82.33\%\)) and average Trajectory Accuracy (\(\geq 38.03\%\)) across all orders. It outperforms naive SFT (76.87–77.21\%) by 5.40–5.55 percentage points in Step Accuracy and surpasses GRPO (81.44–81.75\%) by 0.80-0.86 percentage points, confirming its strong generalizability to diverse task sequences.
    \item Notably, CGL achieves a \textbf{positive FM (+0.13) in Task Order 2} — a rare outcome in continual learning, as it indicates not only no forgetting of old tasks but also slight performance enhancement. In stark contrast, SFT exhibits severe forgetting with FMs ranging from -5.33 to -6.03 across all orders, while GRPO (FM = -0.33 to -0.62) also suffers from unavoidable performance degradation on previously learned tasks. This highlights CGL’s unique advantage in balancing new-task adaptation and old-task preservation.
\end{itemize}
\begin{figure}
    \centering
    \vspace{-1mm}
    \includegraphics[width=1\linewidth]{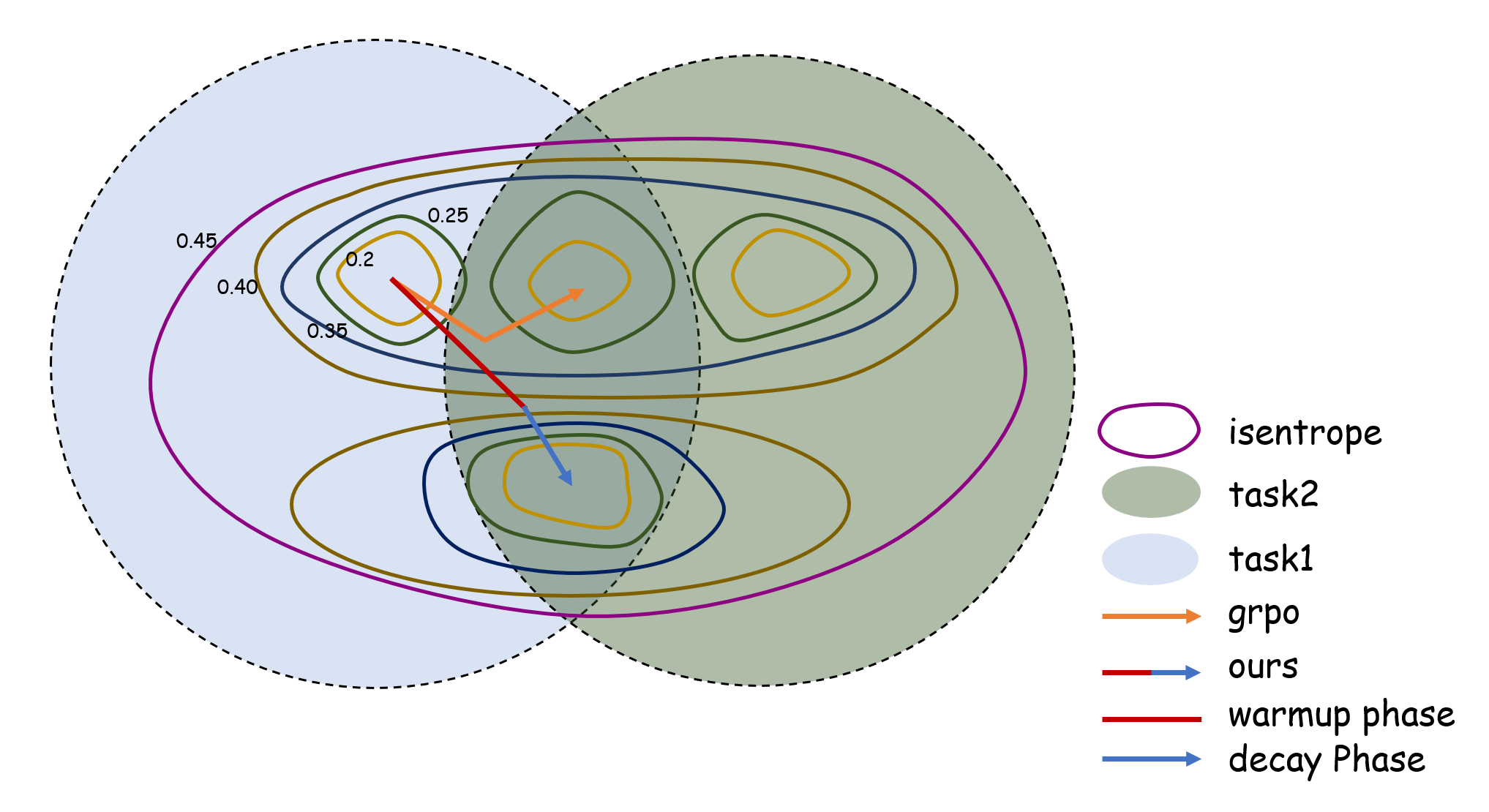}
    \caption{High-accuracy region comparison of GRPO and our CGL in continual GUI learning.}
    \vspace{-5mm}
    \label{fig:CGLgrpo}
\end{figure}

\begin{figure}
    \centering
    \includegraphics[width=1\linewidth]{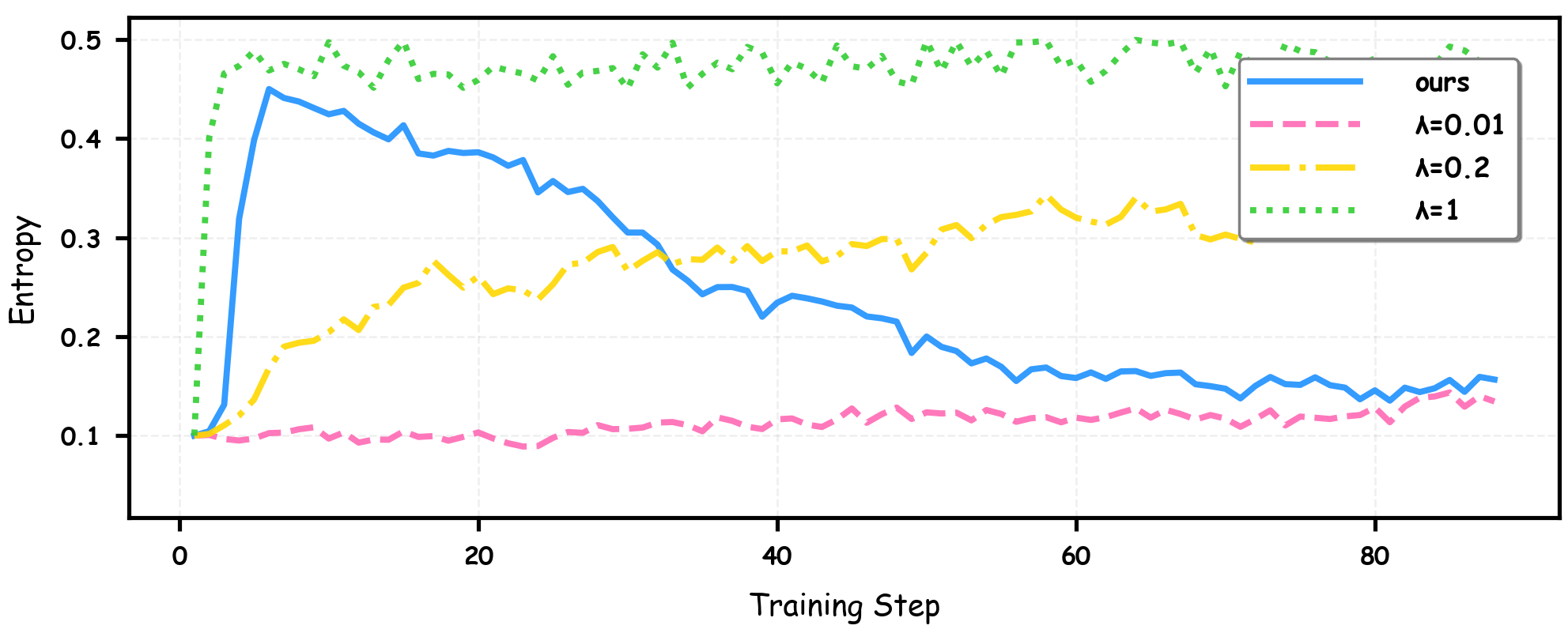}
    \vspace{-6mm}
    \caption{Entropy trends with different hyperparameters}
    \vspace{-3mm}
    \label{fig:entropy_lambda}
\end{figure}

\subsubsection{Ablation Study of Module Contributions}
\paragraph{Analysis of Key Components.}To quantify the individual and synergistic contributions of each component in , we conduct ablation experiments on \textit{QwenVL2.5-3b-Instruct} under Task Order 1, with results presented in Tab.~\ref{tab:ablation-study}. The evaluated modules include SFT, GRPO, KL divergence constraint, dynamic SFT (D-SFT), dynamic SFT weight (D-$\lambda$), and gradient surgery (G-Surg).

We first start with the naive SFT baseline, which achieves 76.90\% Step-Acc, 23.53\% Traj-Acc, and an FM of -5.73. Introducing the KL divergence constraint to SFT improves Step-Acc by 3.94 percentage points (to 80.84\%) and reduces FM to -1.01, verifying the basic anti-forgetting effect of KL constraints. Notably, all three metrics of SFT+KL are still inferior to those of \textit{GRPO}, demonstrating that GRPO’s superior continual learning capability does not solely stem from its KL divergence constraint.

The GRPO module requires the KL divergence constraint to operate stably: without KL, GRPO collapses and performance metrics are unavailable. Combining GRPO with KL elevates Step-Acc to 81.53\% and Traj-Acc to 36.78\%, with FM reduced to -0.62. This demonstrates that constrained reinforcement learning enhances both accuracy and anti-forgetting capability.

We then incrementally integrate additional modules into the SFT+GRPO baseline:
Integrating static full-SFT into GRPO boosts Step-Acc to 81.68\%. Replacing it with D-SFT further improves Step-Acc to 81.90\%, targeting policy weak points to avoid redundant knowledge interference.
Incorporating D-$\lambda$ based on the D-SFT framework further raises Step-Acc to 82.23\% and Traj-Acc to 37.62\%. This module adjusts SFT’s weight via policy entropy and training timesteps, refining the balance between adapting to new tasks and preserving knowledge of old ones.
Introducing Grad-Surg boosts Step-Acc to 82.10\% and reduces FM to -0.05. This module mitigates gradient conflicts between sequential tasks, preventing performance degradation of previously learned knowledge during new task training.
Combining D-$\lambda$ and Grad-Surg elevates Step-Acc to 82.33\% and Traj-Acc to 38.03\%, as the two modules complement each other in weight adjustment and gradient stabilization.
When all modules are integrated (the full CGL framework), we achieve the optimal performance: 82.33\% Step-Acc, 38.03\% Traj-Acc, and an FM of -0.02. This result confirms that each module contributes complementary improvements, and their synergistic integration enables CGL to balance new-task accuracy and old-task anti-forgetting effectively.
\vspace{-6mm}
\paragraph{Analysis of $\lambda$}
As the core of the D-$\lambda$ module, $\lambda$ regulates the injection intensity of SFT error samples, and its dynamic adjustment coupled with policy entropy is key to CGL’s performance.

Trend Correlation (Fig.~\ref{fig:experiment}) depicts the co-variation of policy entropy (top) and $\lambda$ (bottom) across training steps. At task-switching points (vertical dashed lines), $\lambda$ rises linearly in the warmup phase to boost entropy introducing new behaviors via error-sample injection, then decays exponentially in the decay phase to stabilize entropy. This “entropy rise-then-decay” pattern balances new-task adaptability and old-task retention.

Mechanism Advantage: The isentrope diagram (Fig. \ref{fig:CGLgrpo}) illustrates that GRPO is confined to existing isentropic regions, while our entropy-regulated tuning (driven by dynamic $\lambda$) enables crossing isentropes—jumping from low-entropy local optima to more favorable regions, thus avoiding suboptimal policy stagnation.

\begin{table}[tbp]
\vspace{-1mm}
\centering
\caption{Performance comparison under different $\lambda$ settings for QwenVL2.5.}
\vspace{-2mm}
\small
\setlength{\tabcolsep}{5pt} 
\begin{tabular}{lcccccccc}
\toprule
QwenVL2.5 & Avg.Step-Acc. \\
\midrule
$\lambda=1$  & 81.62 \\
$\lambda=0.2$  & 81.58 \\
$\lambda=0.01$ & 81.90 \\
ours & \textbf{82.33} \\
\bottomrule
\end{tabular}
\vspace{-5mm}
\label{tab:lambda-effects}
\end{table}

Performance Validation (Tab. \ref{tab:lambda-effects}) and entropy trends (Fig. \ref{fig:entropy_lambda}): We compare fixed-$\lambda$ settings with our dynamic $\lambda$ (ours):
$\lambda=1$ :excessive error-sample injection leads to overly high entropy.
$\lambda=0.01$ insufficient injection maintains low entropy, trapping the model in local optima.
Our D-$\lambda$ (dynamic $\lambda$) balances injection intensity across tasks, achieving the highest average score (82.33).

\section{Conclusion}
In this paper, we studied the \emph{Continual GUI Learning (CGL)} problem, and proposed a standardized benchmark and a collaborative SFT-RL framework to address it. Specifically, we constructed a CGL benchmark by partitioning the AndroidControl dataset into 7 app-category-specific subsets of comparable scale, filling the gap of unified evaluation in this field. To alleviate the stability–plasticity trade-off between existing methods, we designed two key mechanisms, Entropy-Regulated Tuning and Gradient Surgery for the SFT-RL framework, which balances prior knowledge retention and new skill acquisition. Extensive experiments under three evaluation protocols show that our approach outperforms pure GRPO, pure SFT, and RIF-RFT, demonstrating its effectiveness in the CGL task.
{
    \small
    \bibliographystyle{ieeenat_fullname}
    \bibliography{main}
}

\clearpage
\setcounter{page}{1}
\maketitlesupplementary

\section{Theoretical Analysis of Entropy Dynamics}
\label{sec:theory}

To provide a rigorous theoretical grounding for our Entropy-Regulated Tuning strategy, we analyze the first-order dynamics of policy entropy $\mathcal{H}(\pi_{\theta})$ under the joint influence of GRPO and SFT. Let $z_{s,a}$ denote the logits such that the policy is defined as $\pi_{\theta}(a|s) = \text{softmax}(z_{s,a}) = \frac{\exp(z_{s,a})}{\sum_{a'} \exp(z_{s,a'})}$.

\subsection{Entropy Sensitivity and Separability}

We first establish the relationship between logit updates and entropy variations.

\begin{lemma}[Entropy-Covariance Relationship]
The first-order approximation of the change in policy entropy $\Delta \mathcal{H}$ induced by a logit update vector $\Delta \mathbf{z}_s$ is determined by the negative covariance between the log-probabilities and the update values:
\begin{equation}
\label{eq:entropy_cov}
\begin{aligned}
\Delta \mathcal{H} &= \mathcal{H}(\pi_{\theta}^{k+1}|s) - \mathcal{H}(\pi_{\theta}^{k}|s) \\
&= -\mathrm{Cov}_{a \sim \pi_{\theta}^k} \left( \log \pi_{\theta}^k(a|s), \Delta z_{s,a} \right) + o(\eta)
\end{aligned}
\end{equation}
where $\eta$ is the learning rate.
\end{lemma}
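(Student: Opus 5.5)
The plan is a first-order Taylor expansion of the entropy, viewed as a function of the logit vector $\mathbf{z}_s$, around the current iterate $\mathbf{z}_s^k$. We write $\mathcal{H}(\mathbf{z}) = -\sum_a \pi_a \log \pi_a$ with $\pi = \mathrm{softmax}(\mathbf{z})$ and compute its gradient. By the chain rule, this requires two ingredients:
\begin{itemize}
\item the softmax Jacobian, $\partial \pi_a/\partial z_b = \pi_a(\mathbb{I}[a=b]-\pi_b)$;
\item the derivative of the entropy in $\pi$, namely $\partial \mathcal{H}/\partial \pi_a = -(\log \pi_a + 1)$.
\end{itemize}
Combining them gives
\begin{equation*}
\frac{\partial \mathcal{H}}{\partial z_b} = -\sum_a (\log\pi_a+1)\,\pi_a(\mathbb{I}[a=b]-\pi_b) = -\pi_b\Big(\log\pi_b - \sum_a \pi_a\log\pi_a\Big).
\end{equation*}
The constant $+1$ drops out because $\sum_a \pi_a(\mathbb{I}[a=b]-\pi_b)=0$.

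Next we apply the first-order expansion $\mathcal{H}(\mathbf{z}^k+\Delta\mathbf{z}) = \mathcal{H}(\mathbf{z}^k) + \nabla_{\mathbf{z}}\mathcal{H}^\top \Delta\mathbf{z} + O(\|\Delta\mathbf{z}\|^2)$. The linear term equals
\begin{equation*}
-\sum_b \pi_b \Delta z_b\big(\log \pi_b - \mathbb{E}_{a\sim\pi}[\log\pi_a]\big).
\end{equation*}
This is exactly $-\mathbb{E}_{\pi}[(\log\pi - \mathbb{E}\log\pi)\,\Delta z]$, which equals $-\mathrm{Cov}_{a\sim\pi^k}(\log\pi^k_a, \Delta z_a)$, since centering one factor suffices to form a covariance.

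The remaining task is to justify that the remainder is $o(\eta)$. We use $\Delta\mathbf{z}_s = \eta\,\nabla_{\mathbf{z}}\mathcal{L}$, as in the main text, with $\nabla_{\mathbf{z}}\mathcal{L}$ bounded. Then the remainder is $O(\eta^2)$, with a constant controlled by the Hessian of $\mathcal{H}$ on a neighbourhood of $\mathbf{z}^k$.

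We expect this to be the main obstacle, or at least the step needing care, because $\log\pi$ blows up as $\pi_a\to 0$. However, $\mathcal{H}$ is a smooth function of the logits on all of $\mathbb{R}^{|\mathcal{A}|}$: for finite logits every $\pi_a>0$, and terms such as $\pi_a\log\pi_a$ and its derivatives stay bounded. We would invoke the mean-value form of Taylor's theorem on the segment $[\mathbf{z}^k, \mathbf{z}^k+\Delta\mathbf{z}]$, which is compact, so the Hessian is uniformly bounded there. This yields the remainder $O(\eta^2)=o(\eta)$.

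A subtlety worth flagging is that if the update is taken in parameters $\theta$ rather than directly in logits, then $\Delta z$ is itself only first-order accurate. In that case we absorb the extra $O(\eta^2)$ logit discrepancy into the same remainder.
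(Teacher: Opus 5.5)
Your proposal is correct and follows essentially the same route as the paper: a first-order Taylor expansion of $\mathcal{H}$ in the logits, using $\partial\mathcal{H}/\partial z_b = -\pi_b(\log\pi_b - \mathbb{E}_{\pi}[\log\pi])$. Your version is cleaner than the paper's. The paper writes this gradient as $-\pi_b(\log\pi_b - \mathcal{H})$, which is a sign slip since $\mathbb{E}_\pi[\log\pi] = -\mathcal{H}$, and it therefore has to appeal to centered updates or softmax shift-invariance; your computation gives the covariance exactly, and you also justify the $o(\eta)$ remainder that the paper leaves implicit.
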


\begin{proof}
The gradient of entropy with respect to logits is $\nabla_{z_{s,a}} \mathcal{H} = -\pi_{\theta}(a|s)(\log \pi_{\theta}(a|s) - \mathcal{H})$. The first-order Taylor expansion gives $\Delta \mathcal{H} \approx \sum_a \nabla_{z_{s,a}} \mathcal{H} \cdot \Delta z_{s,a}$. Substituting the gradient, we get $\Delta \mathcal{H} \approx -\mathbb{E}_{a}[\log \pi_{\theta}(a) \Delta z_{s,a}] + \mathcal{H}\mathbb{E}_{a}[\Delta z_{s,a}]$. This simplifies to the negative covariance definition $-\mathbb{E}[(\log \pi - \mathbb{E}[\log \pi])(\Delta z - \mathbb{E}[\Delta z])]$, assuming centered updates or simplified via the softmax invariance.
\end{proof}

Under our hybrid objective $\mathcal{L} = \mathcal{L}_{GRPO} + \lambda \mathcal{L}_{SFT}$, the total update decomposes linearly: $\Delta z_{total} = \Delta z^{GRPO} + \lambda \Delta z^{SFT}$. Due to the linearity of the covariance operator, the entropy dynamics are separable:
\begin{equation}
\Delta \mathcal{H}_{total} \approx \Delta \mathcal{H}_{GRPO} + \lambda \Delta \mathcal{H}_{SFT}
\end{equation}
This property allows us to analyze the impact of GRPO and SFT independently.

\subsection{Entropy Decay in GRPO: The Matthew Effect}
\label{Matthew}
The GRPO algorithm optimizes the expected relative advantage. We show that this naturally leads to entropy reduction.

\begin{lemma}[GRPO Update Dynamics]
The logit update under GRPO, denoted as $\Delta z_{s,a}^{GRPO}$, is proportional to the probability-weighted advantage:
\begin{equation}
\label{eq:grpo_update}
\Delta z_{s,a}^{GRPO} = \eta \cdot \pi_{\theta}(a|s) \cdot A(s,a)
\end{equation}
assuming the mean advantage baseline is handled implicitly.
\end{lemma}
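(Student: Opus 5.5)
The plan is to treat the GRPO objective in Eq.~\eqref{eq:grpo} as a single-state, tabular softmax policy-gradient problem in logit space. I will compute its gradient with respect to the logits $z_{s,\cdot}$ at the current iterate $\theta=\theta_{\text{old}}$, and then read off one gradient-ascent step $\Delta z_{s,a}=\eta\,\nabla_{z_{s,a}}J$. I work at a fixed state $s$ (a query $q$ together with the token prefix). I identify the per-token action $o_{i,t}$ with $a$ and write $A(s,a)$ for the group-normalized advantage attached to that action.

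\textbf{Step 1: remove the clipping.} At $\theta=\theta_{\text{old}}$ the importance ratio $\rho_a=\pi_\theta(a|s)/\pi_{\theta_{\text{old}}}(a|s)$ equals $1$, which lies strictly inside $[1-\epsilon,1+\epsilon]$. On a neighbourhood of this point the $\min(\cdot,\mathrm{clip}(\cdot))$ therefore coincides with the unclipped term $\rho_a A(s,a)$, so clipping affects only $o(\eta)$ behaviour. The surrogate objective then reduces to $J(z)=\mathbb{E}_{a\sim\pi_{\text{old}}}[\rho_a A(s,a)]=\sum_a \pi_\theta(a|s)A(s,a)$.

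\textbf{Step 2: handle the KL term.} If $\pi_{\text{ref}}=\pi_{\theta_{\text{old}}}$, the KL term has zero gradient at this point. Otherwise its gradient is $O(\beta)$, and I treat it as negligible at first order, which is consistent with $\beta=0.01$ in the experiments.

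\textbf{Step 3: differentiate the softmax.} Using $\partial \pi(b)/\partial z_a=\pi(b)(\mathbb{I}[a=b]-\pi(a))$, I obtain
\[
\nabla_{z_{s,a}}J=\pi_\theta(a|s)\Big(A(s,a)-\textstyle\sum_b\pi_\theta(b|s)A(s,b)\Big).
\]
The subtracted term is the policy-weighted mean advantage.

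\textbf{Step 4: justify dropping the baseline.} I argue that this mean advantage is zero, or can be absorbed, in two ways.
\begin{itemize}
\item[(i)] GRPO normalizes rewards within each group by subtracting the group mean $\mu_r$. Since the group is sampled from $\pi_{\theta_{\text{old}}}$, the expectation of the normalized advantage under the sampling policy vanishes in the large-$G$ limit.
\item[(ii)] Any residual constant $c$ enters the logit update as $-\eta\, c\,\pi(a)$. This is the point I regard as delicate. The softmax invariance of the Entropy--Covariance Lemma makes only constant shifts $\Delta z_a+\text{const}$ irrelevant, and $-\eta c\,\pi(a)$ is not a constant shift. The honest statement is therefore that the formula holds exactly under the assumption $\mathbb{E}_\pi[A]=0$. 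That is the ``baseline handled implicitly'' hypothesis, and I will state it explicitly.
\end{itemize}
Under this hypothesis the update is $\Delta z^{GRPO}_{s,a}=\eta\,\pi_\theta(a|s)A(s,a)$.

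\textbf{Main obstacle.} I expect the main difficulty to be connecting the per-token, length-normalized sum in Eq.~\eqref{eq:grpo} to a single-state expectation. I would handle this by conditioning on the prefix: each token's contribution at state $s=(q,o_{<t})$ has exactly the form above, up to the positive weight $1/(G|o_i|)$, which I absorb into $\eta$. The argument also assumes that logits are directly parameterized, i.e.\ the tabular assumption already implicit in the Entropy--Covariance Lemma, rather than tracing updates through shared network parameters $\theta$.
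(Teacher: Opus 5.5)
Your proposal is correct and follows essentially the same route as the paper: reduce the objective to $J=\sum_a \pi_\theta(a|s)A(s,a)$, differentiate through the softmax Jacobian to obtain $\pi_\theta(a|s)\bigl(A(s,a)-\sum_b \pi_\theta(b|s)A(s,b)\bigr)$, and drop the policy-weighted mean advantage by appeal to GRPO's group normalization. Your extra care (clipping inactive at ratio $1$, the KL term, and stating $\mathbb{E}_\pi[A]=0$ as an explicit hypothesis) only sharpens what the paper leaves implicit, and making that hypothesis explicit is the right call, since group normalization gives a zero mean at the query level but not at intermediate token prefixes $(q,o_{<t})$.
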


\begin{proof}
The gradient of the GRPO objective $J = \mathbb{E}_{a \sim \pi}[A(s,a)]$ with respect to logit $z_{s,a}$ is:
\begin{equation}
\begin{aligned}
\frac{\partial J}{\partial z_{s,a}} &= \mathbb{E}_{a' \sim \pi} \left[ \frac{\partial \log \pi(a'|s)}{\partial z_{s,a}} A(s,a') \right] \\
&= \sum_{a'} \pi(a'|s) (\mathbb{I}[a=a'] - \pi(a|s)) A(s,a') \\
&= \pi(a|s) \left( A(s,a) - \sum_{a'} \pi(a'|s) A(s,a') \right)
\end{aligned}
\end{equation}
In GRPO, advantages are normalized group-wise such that $\sum \pi A \approx 0$. Thus, the update simplifies to $\Delta z_{s,a} = \eta \frac{\partial J}{\partial z_{s,a}} \approx \eta \pi(a|s) A(s,a)$.
\end{proof}

\textbf{Analysis (The Matthew Effect):} 
Eq.~\eqref{eq:grpo_update} reveals a self-reinforcing mechanism. Actions with initially high probability $\pi(a|s)$ and positive advantage $A(s,a)>0$ receive the largest positive updates.
\begin{itemize}
    \item High $\pi(a|s)$ implies high $\log \pi(a|s)$.
    \item High $\pi(a|s)$ leads to large positive $\Delta z_{s,a}$.
\end{itemize}
This creates a \textbf{positive correlation} between the distribution and the update: $\mathrm{Cov}(\log \pi, \Delta z^{GRPO}) > 0$. According to Lemma 1, this results in $\Delta \mathcal{H}_{GRPO} < 0$. This mathematically explains the "Stage 3" behavior where the model converges sharply to a low-entropy state.

\subsection{Entropy Injection via Error-Driven SFT}

When the model exhibits pathological bias (i.e., convergence to a wrong action), we activate SFT on the ground truth $a^*$.

\subsection{Entropy Injection via Error-Driven SFT}

When the model suffers from pathological bias (Stage 1), we activate SFT on the ground-truth action $a^*$.

\begin{lemma}[SFT Entropy Injection]
Let the SFT update follow the gradient of the target probability likelihood. The logit update $\Delta z^{SFT}$ satisfies the zero-sum property and induces entropy injection by creating a negative covariance with the current distribution.
\end{lemma}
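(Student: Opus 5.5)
I plan to compute the SFT logit update explicitly, verify the zero-sum property directly, and then substitute into the covariance expression from the Entropy-Covariance Lemma. For a single-step target $a^*$, the SFT loss is $-\log \pi_\theta(a^*|s)$. Its negative gradient with respect to $z_{s,a}$ is $\mathbb{I}[a=a^*] - \pi_\theta(a|s)$. A descent step with learning rate $\eta$ therefore gives
\[
\Delta z^{SFT}_{s,a} = \eta\left(\mathbb{I}[a=a^*] - \pi_\theta(a|s)\right).
\]
Summing over $a$ yields $\eta(1 - 1) = 0$, which is the zero-sum property. Because of this, the centering caveat raised in the proof of the Entropy-Covariance Lemma does not arise for SFT: adding a constant to all logits leaves the softmax unchanged, and here the update already sums to zero.

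Next I evaluate the covariance term. Write $p^* = \pi_\theta(a^*|s)$ and $\ell_a = \log \pi_\theta(a|s)$. Since $\mathrm{Cov}(X,Y) = \mathbb{E}[XY] - \mathbb{E}[X]\,\mathbb{E}[Y]$, I need three quantities. First,
\[
\mathbb{E}_\pi[\Delta z] = \eta\left(p^* - \textstyle\sum_a \pi_a^2\right).
\]
Second, $\mathbb{E}_\pi[\ell] = -\mathcal{H}$. Third,
\[
\mathbb{E}_\pi[\ell\,\Delta z] = \eta\left(p^*\ell_{a^*} - \textstyle\sum_a \pi_a^2 \ell_a\right).
\]
Combining these gives
\[
\mathrm{Cov}_\pi(\ell,\Delta z^{SFT}) = \eta\left[p^*(\ell_{a^*} + \mathcal{H}) - \textstyle\sum_a \pi_a^2(\ell_a + \mathcal{H})\right].
\]
By the Entropy-Covariance Lemma, $\Delta\mathcal{H}_{SFT} \approx -\mathrm{Cov}$. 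The statement therefore reduces to showing this bracket is negative.

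The main obstacle is that the bracket is not negative unconditionally. If $a^*$ is already the mode, SFT sharpens the distribution and lowers entropy. So I will prove the lemma under the pathological-bias regime stated in Phase 1, namely $p^* \to 0$ with the mass concentrated on wrong actions. In that regime, the first term is $p^*(\log p^* + \mathcal{H}) \to 0$, because $p^*\log p^* \to 0$.

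The second term is $\sum_a \pi_a^2(\ell_a + \mathcal{H})$, which equals $\mathbb{E}_\pi[\pi_a(\ell_a - \mathbb{E}_\pi\ell)]$. This is the covariance of $\pi_a$ and $\log\pi_a$ under $\pi$. It is nonnegative because $x \mapsto x$ and $x \mapsto \log x$ are both increasing, so Chebyshev's association inequality applies. It is strictly positive unless $\pi$ is uniform on its support.

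So under the condition that $\pi$ is non-uniform and $p^*$ is sufficiently small, concretely $p^*(\ell_{a^*} + \mathcal{H}) < \mathrm{Cov}_\pi(\pi_a, \log\pi_a)$, the covariance is negative and $\Delta\mathcal{H}_{SFT} > 0$. A cleaner sufficient condition is $\ell_{a^*} \le -\mathcal{H}$, i.e.\ $a^*$ is less likely than the "typical" action. In that case the first term is itself $\le 0$ and the conclusion follows immediately. I will state the lemma with this explicit hypothesis, since it formalizes the phrase "pathological bias".
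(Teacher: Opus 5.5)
Your proof is correct, and in the decisive step it takes a different route from the paper's. The update is not where you differ. The paper differentiates the probability $\pi_\theta(a^*|s)$ itself and gets $\Delta z^{SFT}_{s,a}=\eta\,p^*(\mathbb{I}[a=a^*]-\pi_\theta(a|s))$. You differentiate $\log\pi_\theta(a^*|s)$, which matches the actual SFT loss and the main-text formula. The two differ only by the positive factor $p^*$, so the zero-sum check and the sign of the covariance come out the same.

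The sign argument is where the routes diverge. The paper gives only a qualitative sign-pattern argument in the regime $\pi_\theta(a^*|s)\to 0$: the target has low log-probability and a positive update, while the wrong actions have high log-probability and negative updates. That argument is airtight only when $a^*$ is the least likely action, so that $\Delta z_a$ and $\log\pi_\theta(a|s)$ are oppositely ordered. Any wrong action less likely than $a^*$ forms a concordant pair with $a^*$, and the paper simply ignores it.

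Your exact decomposition, $\mathrm{Cov}_\pi(\ell,\Delta z^{SFT})=\eta\,[p^*(\ell_{a^*}+\mathcal{H})-\mathrm{Cov}_\pi(\pi_a,\log\pi_a)]$, together with Chebyshev's association inequality on the second term, handles this case. It gives the sharper sufficient condition $\ell_{a^*}\le-\mathcal{H}$, which is weaker than ``$a^*$ is least likely''. It also makes explicit that the lemma needs a hypothesis at all, since it fails when $a^*$ is the mode.

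You can connect your condition back to the paper's regime. Since $\mathcal{H}\le\log|\mathcal{A}|$, any $p^*<1/|\mathcal{A}|$ gives $\ell_{a^*}<-\mathcal{H}$. That yields a strictly negative covariance with no non-uniformity assumption. If you only have $\ell_{a^*}=-\mathcal{H}$, you need $\pi$ non-uniform for strictness.

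One small correction concerns your centering remark. It conflates $\sum_a\Delta z_a=0$ with $\mathbb{E}_\pi[\Delta z]=0$, and your own computation shows $\mathbb{E}_\pi[\Delta z]=\eta(p^*-\sum_a\pi_a^2)$, which is nonzero in general. No centering is needed, though: $\sum_a\nabla_{z_a}\mathcal{H}\,\Delta z_a=-\mathbb{E}_\pi[(\ell-\mathbb{E}_\pi\ell)\,\Delta z]$ is exactly $-\mathrm{Cov}_\pi(\ell,\Delta z)$. Your argument never relies on that remark.
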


\begin{proof}
Consider the optimization of the probability mass $\pi_{\theta}(a^*|s)$ directly. Utilizing the derivative property of the Softmax function, $\frac{\partial \pi_i}{\partial z_j} = \pi_i (\delta_{ij} - \pi_j)$, the update for logit $z_{s,a}$ is derived as:
\begin{equation}
\label{eq:sft_update}
\begin{aligned}
\Delta z_{s,a}^{SFT} &= \eta \cdot \frac{\partial \pi_{\theta}(a^*|s)}{\partial z_{s,a}} \\
&= \eta \cdot \pi_{\theta}(a^*|s) \cdot \left( \mathbb{I}[a=a^*] - \pi_{\theta}(a|s) \right)
\end{aligned}
\end{equation}
We verify the zero-sum property of this update:
\begin{equation}
\sum_{a} \Delta z_{s,a}^{SFT} = \eta \pi_{\theta}(a^*|s) \left( \sum_a \mathbb{I}[a=a^*] - \sum_a \pi_{\theta}(a|s) \right) = 0
\end{equation}
This confirms the update purely redistributes probability mass. We then analyze the correlation structure in the error state ($\pi_{\theta}(a^*|s) \to 0$):
\begin{enumerate}
    \item \textbf{For the target $a=a^*$:} The current log-probability $\log \pi_{\theta}$ is minimal (negative), while the update component $(\mathbb{I} - \pi_{\theta}) > 0$ is positive.
    \item \textbf{For erroneous actions $a \neq a^*$:} The current log-probability $\log \pi_{\theta}$ is high (confident error), while the update component $(0 - \pi_{\theta}) < 0$ is negative.
\end{enumerate}
Despite the scaling factor $\pi_{\theta}(a^*|s)$, the direction of $\Delta z$ is strictly opposite to the bias of $\pi_{\theta}$. Thus, $\mathrm{Cov}(\log \pi_{\theta}, \Delta z^{SFT}) < 0$.
\end{proof}

\textbf{Analysis:}
According to Lemma 1, the negative covariance implies $\Delta \mathcal{H}_{SFT} > 0$. By increasing the weight $\lambda$ in \textbf{Stage 2}, the framework amplifies this corrective update. Even if the magnitude is initially dampened by the small $\pi_{\theta}(a^*|s)$, the directional alignment forces the "heating up" of the distribution, breaking the local minima of the initial error.

\section{Reward Function For Gui Agent}
\label{reward_ui}
To guide effective policy learning, we categorize UI actions into three classes with tailored reward rules.

\begin{itemize}
    \item \textbf{Navigation and State Actions}: Include \texttt{Home()}, \texttt{BACK()}, \texttt{WAIT()}, \texttt{Finish()}.Reward: +1 for exact action type match with ground truth (GT); 0 otherwise.
    
    \item \textbf{Parameterized Actions}: Include \texttt{INPUT\_TEXT(\textit{str})}, \texttt{SCROLL(\textit{direction})},\texttt{OPEN\_APP(\textit{app\_name})}.Reward: +1 for type match (Stage 1),+ 1 for exact parameter match (Stage 2), total maximum +2.
    
    \item \textbf{Spatial Interaction Actions}: Include \texttt{CLICK([x,y])}, \texttt{LONG\_PRESS([x,y])}.Reward: +1 for type match (Stage 1) + 1 if predicted coordinates fall within GT element’s bounding box (Stage 2), total maximum +2.
\end{itemize}

\begin{table*}[!h]
  \centering
  \caption{App Category Statistics for Training Dataset}
  \label{tab:app_category_stats}
  \begin{tabular}{c c c l}
    \toprule
    Category         & Number of Apps  & Number of Episodes & App Examples \\
    \midrule
    Shopping         & 15                          & 1140                               & amazon, ebay, alibaba.com \\
    Productivity     & 15                          & 1240                               & Drive, WPS Office, Word \\
    Communication    & 8                           & 617                                & Gmail, Outlook, Skype \\
    Travel           & 10                          & 686                                & Maps, booking.com, Expedia \\
    Tools            & 13                          & 771                                & Clock, Recorder, Settings \\
    Education        & 11                          & 559                                & Leafsnap, Arts \& Culture, Duolingo \\
    Entertainment    & 17                          & 1030                               & YouTube, Vimeo, Pinterest \\
    \bottomrule
  \end{tabular}
\end{table*}

\begin{table*}[!htbp]
  \centering
  \caption{\textbf{Evolution of Task-Specific Performance (LLaVA-OneVision-0.5b, Task Order 1).} Columns represent the sequence of training tasks (left to right); rows represent the evaluation performance on specific test sets after each training stage.}
  \label{tab:continual_learning_results_llava}
  \setlength{\tabcolsep}{4pt}
  \begin{tabular}{l c c c c c c c}
    \toprule
    \textbf{Training Stage} $\rightarrow$ & Shopping & Productivity & Communication & Travel & Tools & Education & Entertainment \\
    \midrule
    \textit{Test on:} & & & & & & & \\
    Shopping & 77.63  & 75.63       & 75.58       & 76.58       & 76.12       & 76.49       & 75.55      \\
    Productivity & -        & 80.80       & 80.80       & 78.39       & 80.97       & 81.30       & 81.32      \\
    Communication &-          &-            & 80.88       & 77.90       & 80.77       & 80.33       & 80.73      \\
    Travel & -        &    -         & -            & 77.78       & 77.09       & 76.27       & 76.88      \\
    Tools &    -      &    -         & -            &    -        & 84.14       & 84.09       & 83.41      \\
    Education & -        &    -         &        -      &    -        &    -        & 69.11       & 68.90      \\
    Entertainment &       -    & -            &    -        &    -        &  -           &      -        & 78.13      \\
    \bottomrule
  \end{tabular}
\end{table*}

\begin{table*}[!htbp]
  \centering
  \caption{\textbf{Evolution of Task-Specific Performance (CGL on Qwen2.5-VL-3b, Task Order 1).} CGL demonstrates stability and positive transfer (e.g., Tools accuracy improves over time).}
  \label{tab:continual_learning_results_CGL}
  \setlength{\tabcolsep}{4pt}
  \begin{tabular}{l c c c c c c c}
    \toprule
    \textbf{Training Stage} $\rightarrow$ & Shopping & Productivity & Communication & Travel & Tools & Education & Entertainment \\
    \midrule
    \textit{Test on:} & & & & & & & \\
    Shopping & 79.77  & 80.67       & 81.33       & 81.77       & 80.53       & 81.72       & 80.32      \\
    Productivity &  -        & 85.41       & 85.18       & 84.13       & 85.46       & 85.29       & 85.46      \\
    Communication &    -      &  -           & 84.23       & 83.98       & 82.98       & 83.20       & 81.99      \\
    Travel &       -    &    -          &       -       & 83.06       & 83.13       & 82.51       & 82.03      \\
    Tools &  -        &    -          &    -          &      -        & 86.42       & 87.88       & 88.17      \\
    Education &       -    &    -         & -            &    -          &  -           & 75.05       & 75.81      \\
    Entertainment &    -      &    -         &        -      &    -        &  -           &        -      & 82.53      \\
    \bottomrule
  \end{tabular}
\end{table*}
\section{Details of Benchmark}

To facilitate a deeper understanding of the \textbf{AndroidControl-CL} benchmark, we provide a comprehensive gallery of episode trajectories in Figs. \ref{fig:task1_ex1} through \ref{fig:task7_ex3}. Unlike previous datasets that often rely on homogenous patterns, our benchmark is designed to capture the heterogeneity of real-world mobile interactions.

Specifically, for each of the 7 functional task categories (Shopping, Productivity, Communication, Travel, Tools, Education, and Entertainment), we showcase \textbf{three distinct episodes sourced from different applications}. These visualizations highlight several key characteristics of the benchmark:
\begin{itemize}
    \item \textbf{Cross-App Diversity:} Even within the same category (\eg Shopping), different apps (Amazon, eBay, Flipkart) exhibit unique UI layouts, search logic, and interaction flows (see Figs. \ref{fig:task1_ex1}--\ref{fig:task1_ex3}).
    \item \textbf{Task Complexity:} The examples range from simple information retrieval (\eg checking weather in Fig. \ref{fig:task5_ex2}) to complex, multi-step workflows (\eg booking a flight with specific constraints in Fig. \ref{fig:task4_ex2}).
    \item \textbf{Action Granularity:} The visualizations demonstrate the precise coordinate-based actions and text inputs required, validating the necessity for fine-grained control in GUI agents.
\end{itemize}

For quantitative details regarding the dataset composition, please refer to Tab~\ref{tab:app_category_stats}. The balanced distribution—ranging from 559 episodes in Education to 1240 in Productivity—ensures that the evaluation of continual learning performance is not biased toward dominant application domains.

\section{Additional Experimental Results}
This section provides further empirical evidence supporting the effectiveness of the CGL framework compared to baselines (GRPO, SFT). We analyze performance across models of varying parameter scales and contrast sequential learning with multi-task joint training. Unless otherwise stated, Task Order 1 is adopted as the evaluation protocol.

\subsection{CGL Performance Across Models of Varying Scales}
To validate the scalability and generality of the CGL framework, we conducted experiments on two multimodal models with distinct parameter sizes: the lightweight \textbf{LLaVA-OneVision-0.5b} and the large-scale \textbf{Qwen2.5-VL-3b}.

\noindent\textbf{Performance on Lightweight Model (0.5B):} 
Tab. \ref{tab:continual_learning_results_llava} presents the step-wise accuracy evolution for LLaVA-OneVision-0.5b. Despite the limited model capacity, CGL maintains remarkable stability. For instance, after the initial training on the \textit{Shopping} task, the model preserves high performance on subsequent evaluations even as it adapts to new domains (\eg retaining $\sim$76\% on Shopping after learning Education), demonstrating effective mitigation of forgetting in resource-constrained settings.

\noindent\textbf{Performance on Large-scale Model (3B):} 
As shown in Tab. \ref{tab:continual_learning_results_CGL}, CGL demonstrates superior capability on Qwen2.5-VL-3b, achieving not only retention but also \textbf{positive backward transfer}. Key tasks such as \textit{Shopping}, \textit{Tools}, and \textit{Productivity} exhibit performance maintenance or improvement as new tasks are introduced. Specifically, the accuracy on \textit{Tools} increases from 86.42\% to 88.17\%, and \textit{Productivity} improves marginally from 85.29\% to 85.46\%. This suggests that the synergy between SFT and GRPO in CGL enables the consolidation of generalizable GUI interaction logic.

Quantitatively, CGL achieves a near-zero Forgetting Measure (FM = -0.02) under Task Order 1. Notably, under Task Order 2 (Tab. \ref{tab:continual_learning_results_CGL_order2}), CGL achieves a positive FM (+0.13). This indicates that the learning of subsequent tasks actively reinforces the representations of previously learned tasks, a phenomenon rarely observed in standard continual learning baselines.

\subsection{Baseline Analysis: GRPO and SFT}
To contextualize the performance of CGL, we analyze the behavior of baseline methods on Qwen2.5-VL-3b:

\begin{itemize}
    \item \textbf{GRPO (Tab.~\ref{tab:continual_learning_results_GRPO}):} While GRPO demonstrates strong stability and resistance to catastrophic forgetting, it lacks the rapid adaptation capabilities observed in CGL, serving primarily as a stability anchor.
    \item \textbf{SFT (Tab.~\ref{tab:continual_learning_results_SFT}):} Standard Supervised Fine-Tuning exhibits severe plasticity-stability trade-offs. For instance, after adapting to the \textit{Entertainment} task, performance on early tasks such as \textit{Shopping} drops significantly (79.77\% $\rightarrow$ 70.54\%), confirming that SFT prioritizes new task acquisition at the expense of catastrophic forgetting.
\end{itemize}

\subsection{Comparison with Multi-Task Joint Training}
We further compare our sequential learning approach against the theoretical upper bound of Multi-Task Joint Training (Tab.~\ref{tab:training_perf_transposed}). Joint training assumes simultaneous access to all datasets, which is typically impractical for dynamic, real-world GUI agents.

\textbf{Remarkably, CGL significantly narrows the gap between sequential learning and joint training.} As illustrated in Tab. \ref{tab:training_perf_transposed}:
\begin{itemize}
    \item The average trajectory accuracy of CGL across three different task orders is \textbf{82.33\%} (Order 1), \textbf{82.61\%} (Order 2), and \textbf{82.40\%} (Order 3).
    \item These results are highly competitive with the \textbf{GRPO-Joint-Training} baseline, which achieves \textbf{82.41\%}. Notably, our method under Task Order 2 even slightly outperforms the GRPO joint training baseline.
    \item While \textbf{SFT-Joint-Training} reaches the highest absolute accuracy (83.48\%), it requires full data accessibility. CGL achieves comparable performance to the RL-based joint training upper bound while strictly adhering to the sequential learning constraint, validating the efficacy of our Entropy-Regulated Tuning and Gradient Surgery mechanisms in maintaining policy optimality over time.
\end{itemize}


\begin{table*}[htbp]
  \centering
  \caption{\textbf{Evolution of Task-Specific Performance (CGL on Qwen2.5-VL-3b, Task Order 2).} The distinct task order further validates the robustness of CGL.}
  \label{tab:continual_learning_results_CGL_order2}
  \setlength{\tabcolsep}{4pt}
  \begin{tabular}{l c c c c c c c}
    \toprule
    \textbf{Training Stage} $\rightarrow$ & Shopping & Entertainment  & Education & Tools &  Travel&Communication  & Productivity  \\
    \midrule
    \textit{Test on:} & & & & & & & \\
    Shopping & 79.77 & 79.96  & 80.53    & 80.72       & 80.58       & 80.25       & 79.63            \\
    Entertainment&    -       & 81.55       & 81.45       & 81.05       & 80.71       & 81.45       & 82.08      \\
    Education  &       -    &      -        & 76.67       & 75.81       & 76.03       & 75.27       & 75.81      \\
    Tools & -        &       -       &         -    & 86.52       & 86.23       & 86.61       & 87.29      \\
    Travel &  -        &    -         &    -        &       -       & 81.96       & 82.10       & 82.85      \\
    Communication &    -      &    -          &       -       &  -           &    -        & 84.75       & 84.42      \\
    Productivity & -        &    -          &        -      &    -        &    -        &        -      & 86.18      \\
    \bottomrule
  \end{tabular}
\end{table*}

\begin{table*}[htbp]
  \centering
  \caption{\textbf{Baseline Performance: GRPO (Qwen2.5-VL-3b, Task Order 1).} GRPO shows high stability but lower plasticity compared to CGL.}
  \label{tab:continual_learning_results_GRPO}
  \setlength{\tabcolsep}{4pt}
  \begin{tabular}{l c c c c c c c}
    \toprule
    \textbf{Training Stage} $\rightarrow$ & Shopping & Productivity & Communication & Travel & Tools & Education & Entertainment \\
    \midrule
    \textit{Test on:} & & & & & & & \\
    Shopping & 79.77  & 81.39       & 80.30       & 80.34       & 79.72       & 80.34       & 79.63      \\
    Productivity &  -        & 85.22       & 84.18       & 83.63       & 84.13       & 84.46       & 84.74      \\
    Communication &  -        &  -           & 84.20       & 82.65       & 83.20       & 82.87       & 83.43      \\
    Travel &    -      &      -        &    -          & 81.98       & 80.59       & 80.79       & 81.34      \\
    Tools &    -      &        -      &      -        &       -       & 86.71       & 85.35       & 85.74      \\
    Education &       -    &        -      &         -    &       -       &       -       & 75.05       & 74.08      \\
    Entertainment &    -      &         -    &         -    &    -        &        -      &          -    & 81.73      \\
    \bottomrule
  \end{tabular}
\end{table*}

\begin{table*}[htbp]
  \centering
  \caption{\textbf{Baseline Performance: SFT (Qwen2.5-VL-3b, Task Order 1).} SFT suffers from significant forgetting (e.g., Shopping drops from 79.77\% to 70.54\%).}
  \label{tab:continual_learning_results_SFT}
  \setlength{\tabcolsep}{4pt}
  \begin{tabular}{l c c c c c c c}
    \toprule
    \textbf{Training Stage} $\rightarrow$ & Shopping & Productivity & Communication & Travel & Tools & Education & Entertainment \\
    \midrule
    \textit{Test on:} & & & & & & & \\
    Shopping & 79.77  & 75.62       & 73.59       & 73.34       & 71.57       & 70.89       & 70.54      \\
    Productivity &       -    & 86.49       & 82.97       & 81.92       & 80.89       & 80.12       & 79.52      \\
    Communication &    -      &    -          & 85.30       & 81.80       & 80.23       & 79.67       & 78.90      \\
    Travel &       -    &       -       &    -          & 82.06       & 79.78       & 78.45       & 76.41      \\
    Tools &    -      &         -    &    -        &  -           & 87.81       & 85.01       & 83.24      \\
    Education &    -      &      -        &         -    &        -      &      -        & 74.89       & 69.24      \\
    Entertainment &    -      &       -       &         -    &       -       &      -        &         -    & 80.42      \\
    \bottomrule
  \end{tabular}
\end{table*}

\begin{table*}[htbp]
  \centering
  \caption{\textbf{Comparison with Multi-Task Joint Training (Qwen2.5-VL-3b).} Joint training represents the theoretical upper bound where all data is available simultaneously. CGL (Ours) achieves performance comparable to GRPO-Joint-Training across all task orders.}
  \resizebox{\linewidth}{!}{
  \begin{tabular}{lcccccccc} 
    \toprule
    Training Method        & Shopping & Productivity & Communication & Travel & Tools & Education & Entertainment & Avg. Trajectory Acc. \\ 
    \midrule
    Ours-Order1                  & 80.32     & 85.46        & 81.99          & 82.03  & \textbf{88.17}  &  \underline{75.81}   &  \underline{82.53} & 82.33\\
    Ours-Order2                  & 79.63     & \underline{86.18}        & 84.42 
            & \textbf{82.85}   & 87.29  &  \underline{75.81}   & 82.08 & \underline{82.61} \\
    Ours-Order3                 & 80.39  & 86.17  &  \underline{84.91} & 81.81          &  87.87  &  74.16  & 81.50 & 82.40 \\
    SFT-Joint-Training     & \textbf{81.73}    & \textbf{86.74}        & \textbf{85.75}          & 82.24  & \textbf{88.17} & \textbf{77.00}      & \textbf{82.76}   & \textbf{83.48}      \\
    GRPO-Joint-Training  & \underline{80.58}    & 85.97        & 84.53          & \underline{82.65}  & 87.49 & 74.73      & 80.94  & 82.41        \\
    \bottomrule
  \end{tabular}}
  \label{tab:training_perf_transposed}
\end{table*}


\begin{figure*}[htbp]
  \centering
  \includegraphics[height=5cm, keepaspectratio]{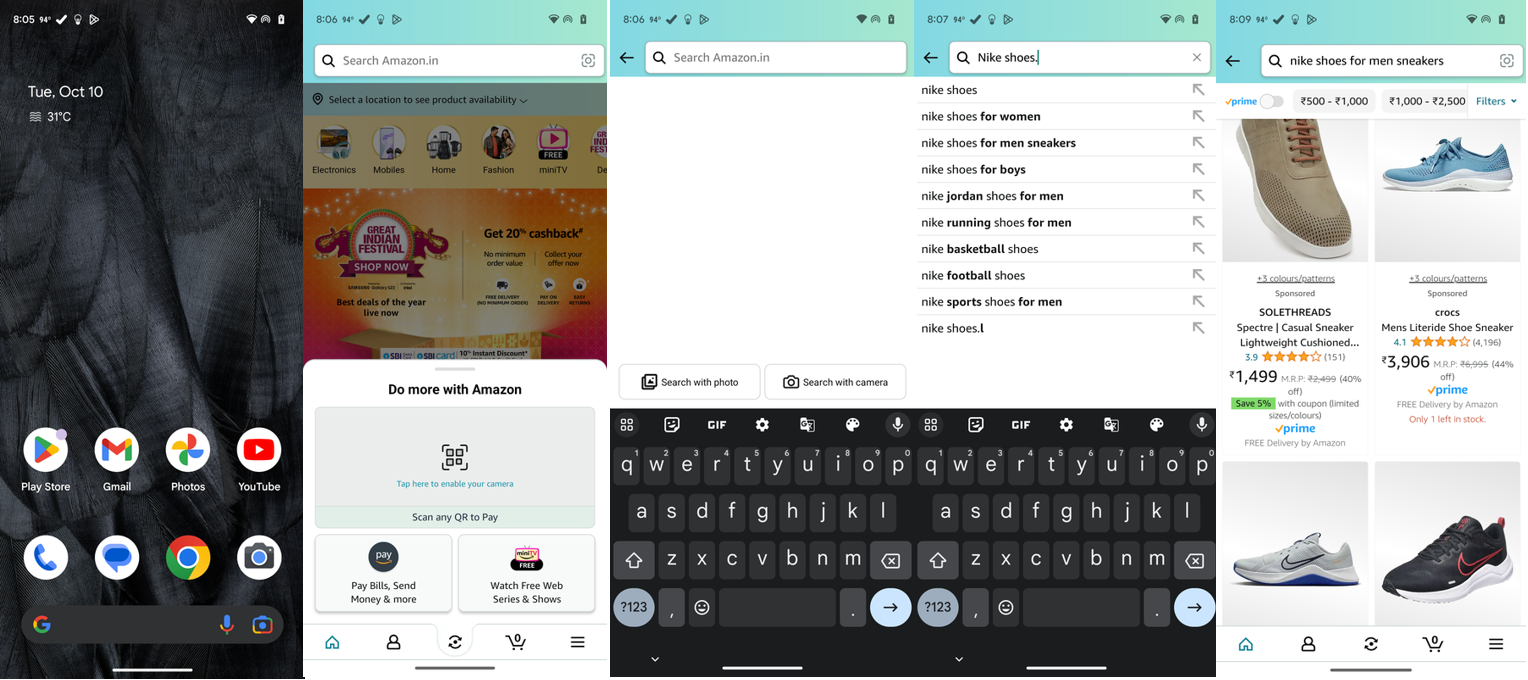} 
  \caption{\textbf{Trajectory Visualization: Search for New Nike Shoes on Amazon.} 
  The agent executes the task through the following action sequence: 
  (1) Launch the application via \texttt{Open(Amazon)}; 
  (2) Focus on the search field via \texttt{Click(0.507, 0.089)}; 
  (3) Input the product query via \texttt{Input\_text(``Nike shoes'')}; 
  (4) Select the target suggestion via \texttt{Click(0.444, 0.210)}; 
  (5) Browse the product list via \texttt{Scroll(down)}; 
  (6) Terminate the episode via \texttt{Finish()}. 
  Coordinates indicate the relative $(x, y)$ position on the screen interface.}
  \label{fig:task1_ex1}
\end{figure*}

\begin{figure*}[htbp]
  \centering
  \includegraphics[height=5cm, keepaspectratio]{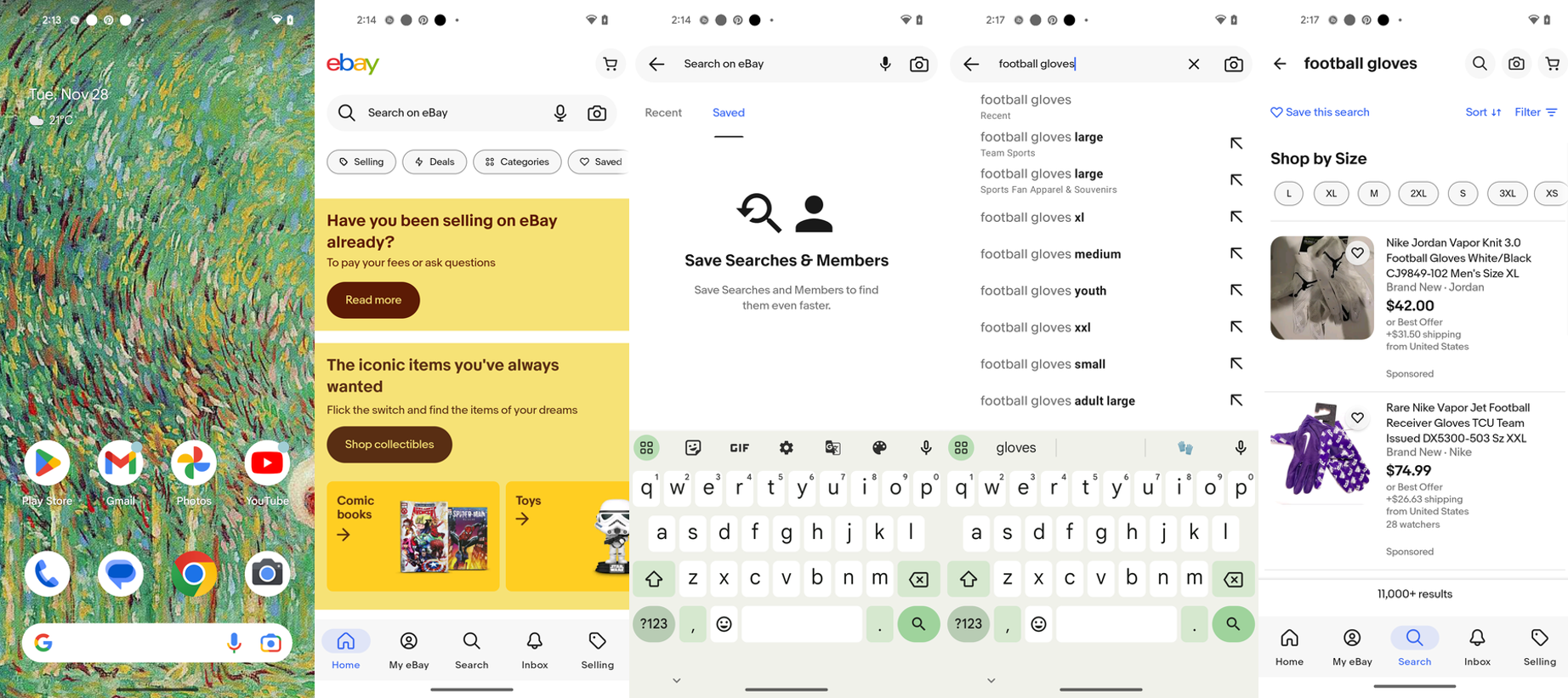} 
  \caption{\textbf{Trajectory Visualization: Product Search on eBay.} 
  The agent locates specific sporting goods via the following sequence: 
  (1) Launch the application via \texttt{Open(eBay)}; 
  (2) Activate the search interface via \texttt{Click(0.382, 0.162)}; 
  (3) Input the product category via \texttt{Input\_text(``football gloves'')}; 
  (4) Select the top suggestion via \texttt{Click(0.490, 0.152)}; 
  (5) Terminate the task via \texttt{Finish()}.}
  \label{fig:task1_ex2}
\end{figure*}

\begin{figure*}[htbp]
  \centering
  \includegraphics[height=5cm, keepaspectratio]{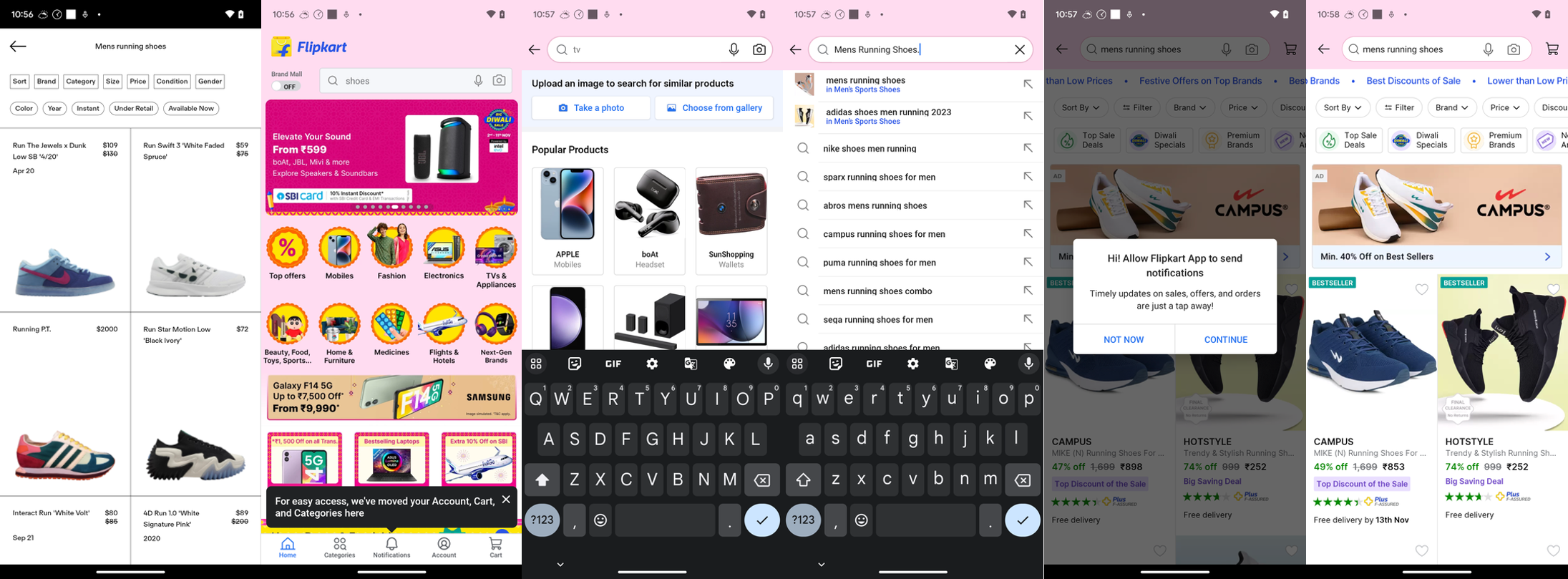} 
  \caption{\textbf{Trajectory Visualization: Search for Mens Running Shoes on Flipkart.} 
  The agent successfully navigates the interface through the following action sequence: 
  (1) Launch the application via \texttt{Open(Flipkart)}; 
  (2) Activate the search field via \texttt{Click(0.562, 0.139)}; 
  (3) Enter the search query via \texttt{Input\_text(``Mens Running Shoes'')}; 
  (4) Select the target product from the results via \texttt{Click(0.318, 0.154)}; 
  (5) Complete the task via \texttt{Finish()}. 
  Coordinates $(x, y)$ represent the normalized relative position on the screen.}
  \label{fig:task1_ex3}
\end{figure*}

\begin{figure*}[htbp]
  \centering
  \includegraphics[height=5cm, keepaspectratio]{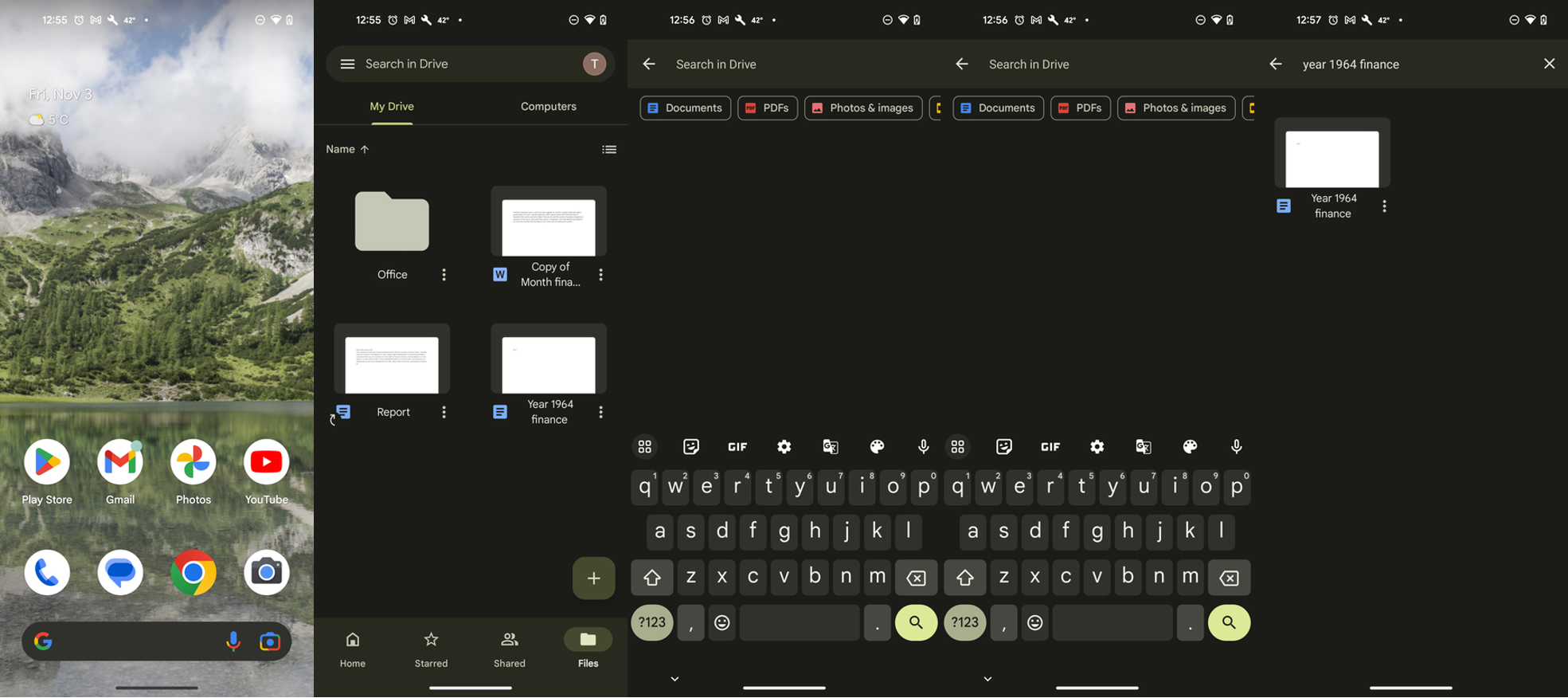} 
  \caption{\textbf{Trajectory Visualization: File Retrieval on Google Drive.} 
  The agent retrieves a specific historical document: 
  (1) Launch the application via \texttt{Open(Drive)}; 
  (2) Focus on the search bar via \texttt{Click(0.500, 0.091)}; 
  (3) Enter the file query via \texttt{Input\_text(``Year 1964 finance'')}; 
  (4) Access the target file from results via \texttt{Click(0.495, 0.159)}; 
  (5) Terminate the task via \texttt{Finish()}.}
  \label{fig:task2_ex1}
\end{figure*}

\begin{figure*}[htbp]
  \centering
  \includegraphics[height=5cm, keepaspectratio]{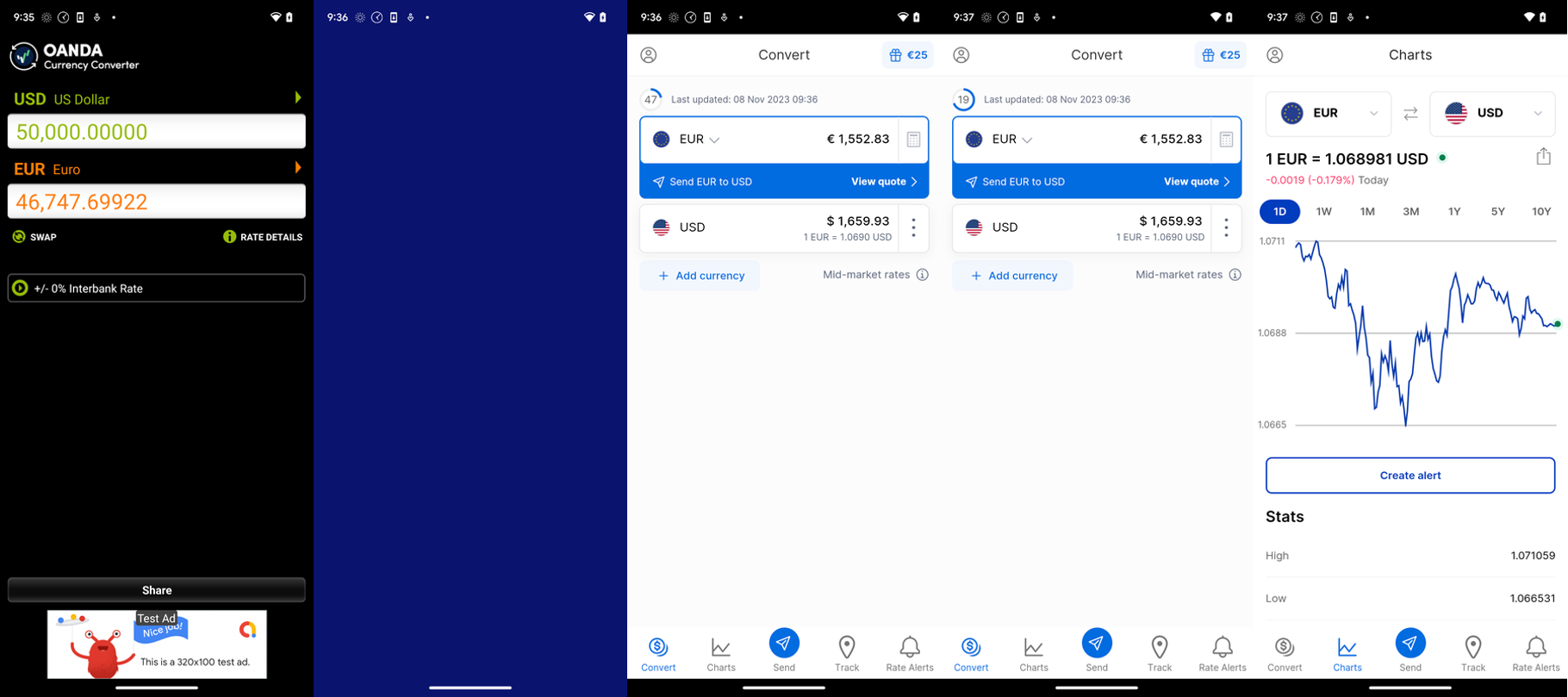} 
  \caption{\textbf{Trajectory Visualization: Currency Exchange Tracking on Xe.} 
  The agent navigates to the financial charting interface: 
  (1) Launch the application via \texttt{Open(Xe)}; 
  (2) Synchronize with the interface via \texttt{Wait()}; 
  (3) Execute a secondary \texttt{Wait()} for data loading; 
  (4) Navigate to the charts section via \texttt{Click(0.300, 0.937)}; 
  (5) Terminate the task via \texttt{Finish()}.}
  \label{fig:task2_ex2}
\end{figure*}

\begin{figure*}[htbp]
  \centering
  \includegraphics[height=5cm, keepaspectratio]{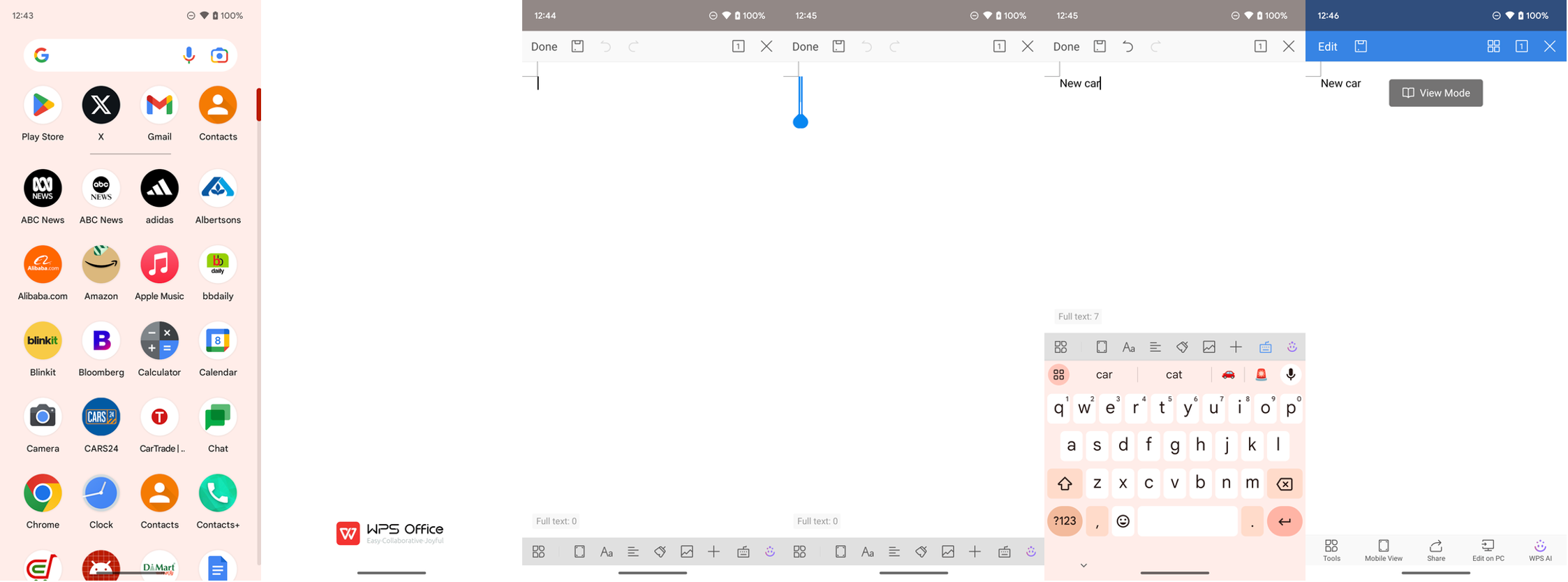} 
  \caption{\textbf{Trajectory Visualization: Document Creation on WPS Office.} 
  The agent initializes a new document: 
  (1) Launch the application via \texttt{Open(WPS Office)}; 
  (2) \texttt{Wait()} for initialization; 
  (3) Input the document title via \texttt{Input\_text(``New car'')}; 
  (4) \texttt{Wait()} for processing; 
  (5) Confirm the creation via \texttt{Click(0.079, 0.080)}; 
  (6) Terminate the task via \texttt{Finish()}.}
  \label{fig:task2_ex3}
\end{figure*}

\begin{figure*}[htbp]
  \centering
  \includegraphics[height=5cm, keepaspectratio]{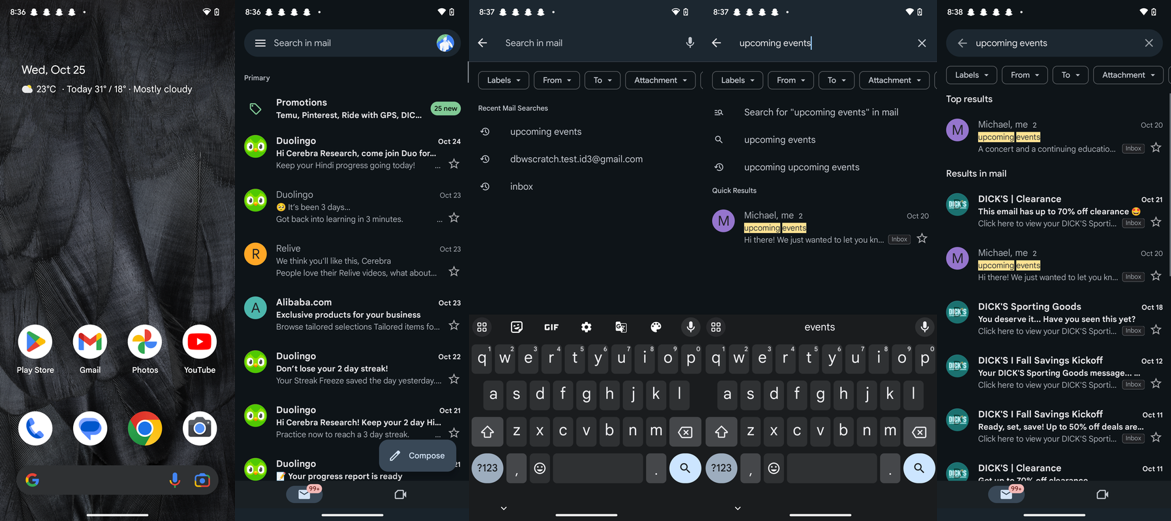} 
  \caption{\textbf{Trajectory Visualization: Email Search on Gmail.} 
  The agent filters inbox contents for specific events: 
  (1) Launch the application via \texttt{Open(Gmail)}; 
  (2) Focus on the search field via \texttt{Click(0.287, 0.084)}; 
  (3) Enter the query via \texttt{Input\_text(``upcoming events'')}; 
  (4) Execute the search via \texttt{Click(0.920, 0.904)}; 
  (5) Terminate the task via \texttt{Finish()}.}
  \label{fig:task3_ex1}
\end{figure*}

\begin{figure*}[htbp]
  \centering
  \includegraphics[height=5cm, keepaspectratio]{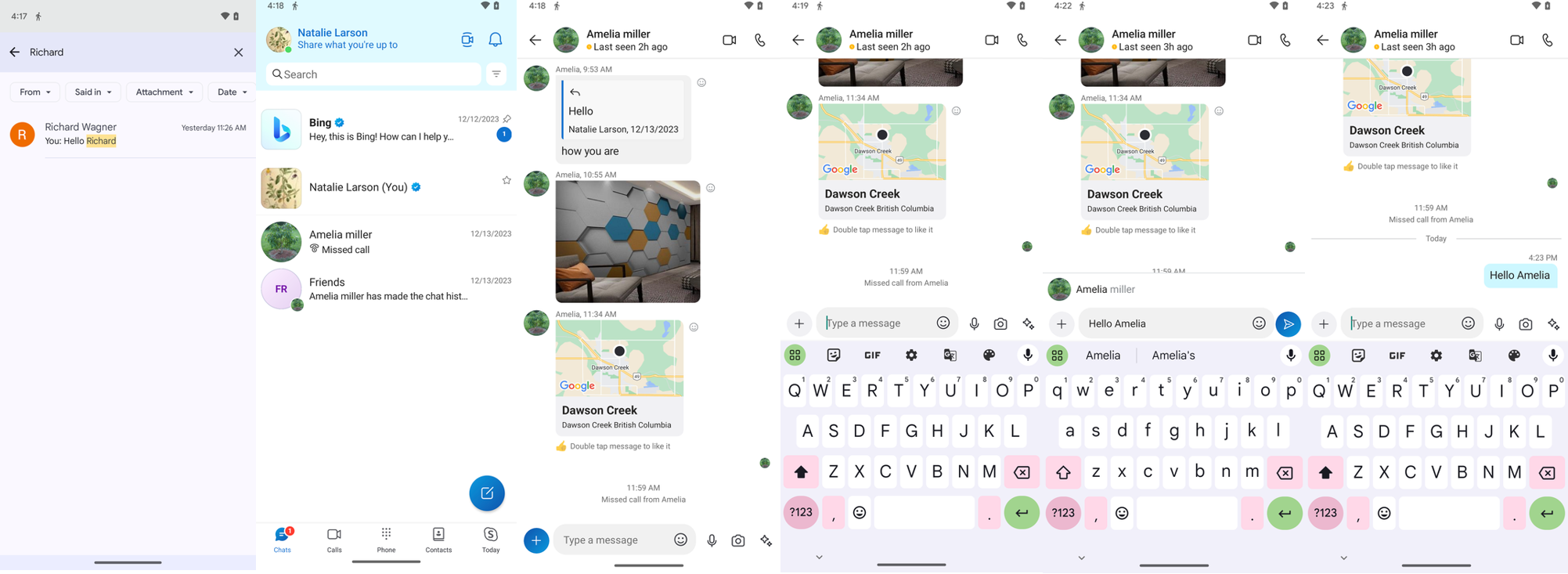} 
  \caption{\textbf{Trajectory Visualization: Messaging on Skype.} 
  The agent sends a greeting to a specific contact: 
  (1) Launch the application via \texttt{Open(Skype)}; 
  (2) Select the contact via \texttt{Click(0.500, 0.435)}; 
  (3) Activate the text input area via \texttt{Click(0.378, 0.942)}; 
  (4) Type the message via \texttt{Input\_text(``Hello Amelia'')}; 
  (5) Send the message via \texttt{Click(0.934, 0.574)}; 
  (6) Terminate the task via \texttt{Finish()}.}
  \label{fig:task3_ex2}
\end{figure*}

\begin{figure*}[htbp]
  \centering
  \includegraphics[height=5cm, keepaspectratio]{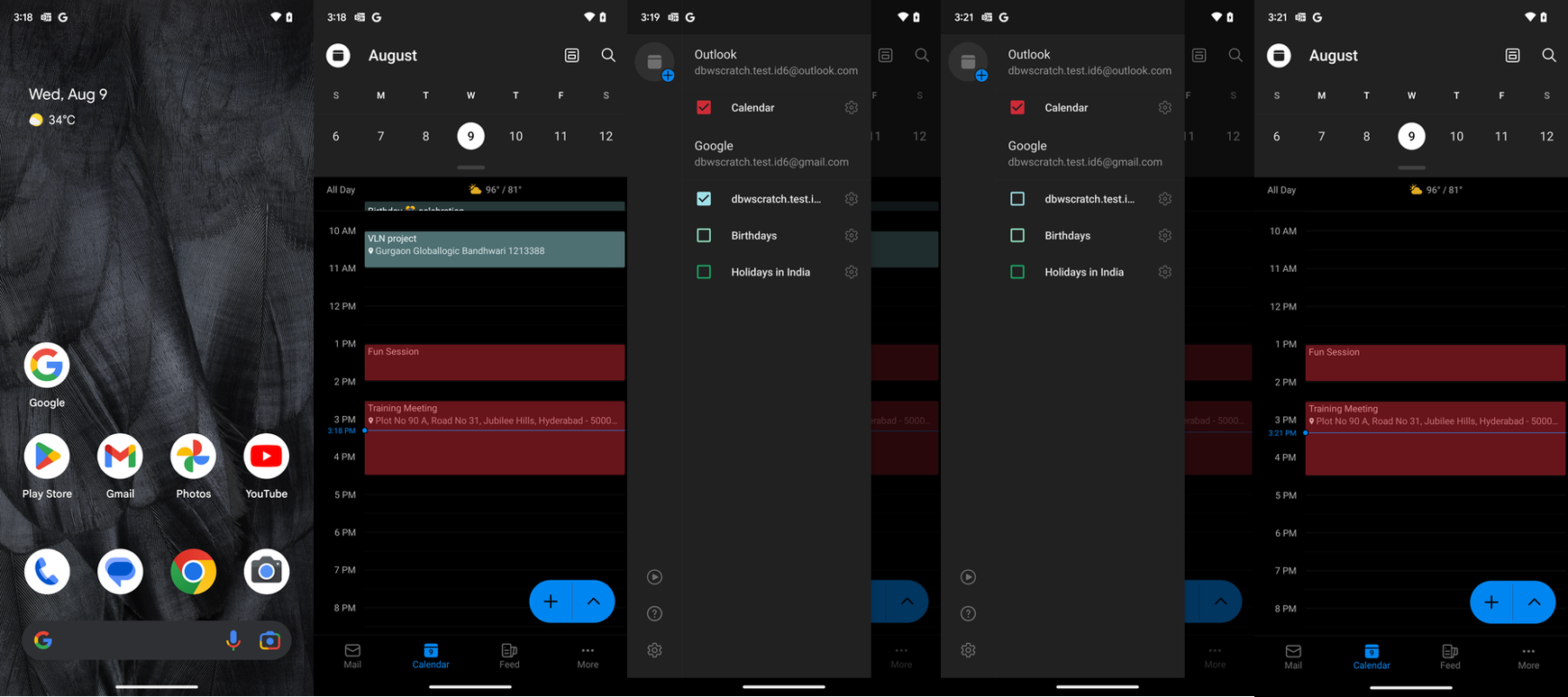} 
  \caption{\textbf{Trajectory Visualization: Account Management on Outlook.} 
  The agent navigates settings to manage accounts: 
  (1) Launch the application via \texttt{Open(Outlook)}; 
  (2) Open the side menu via \texttt{Click(0.078, 0.080)}; 
  (3) Select the target account via \texttt{Click(0.244, 0.285)}; 
  (4) Confirm the action via \texttt{Click(0.862, 0.387)}; 
  (5) Terminate the task via \texttt{Finish()}.}
  \label{fig:task3_ex3}
\end{figure*}

\begin{figure*}[htbp]
  \centering
  \includegraphics[height=5cm, keepaspectratio]{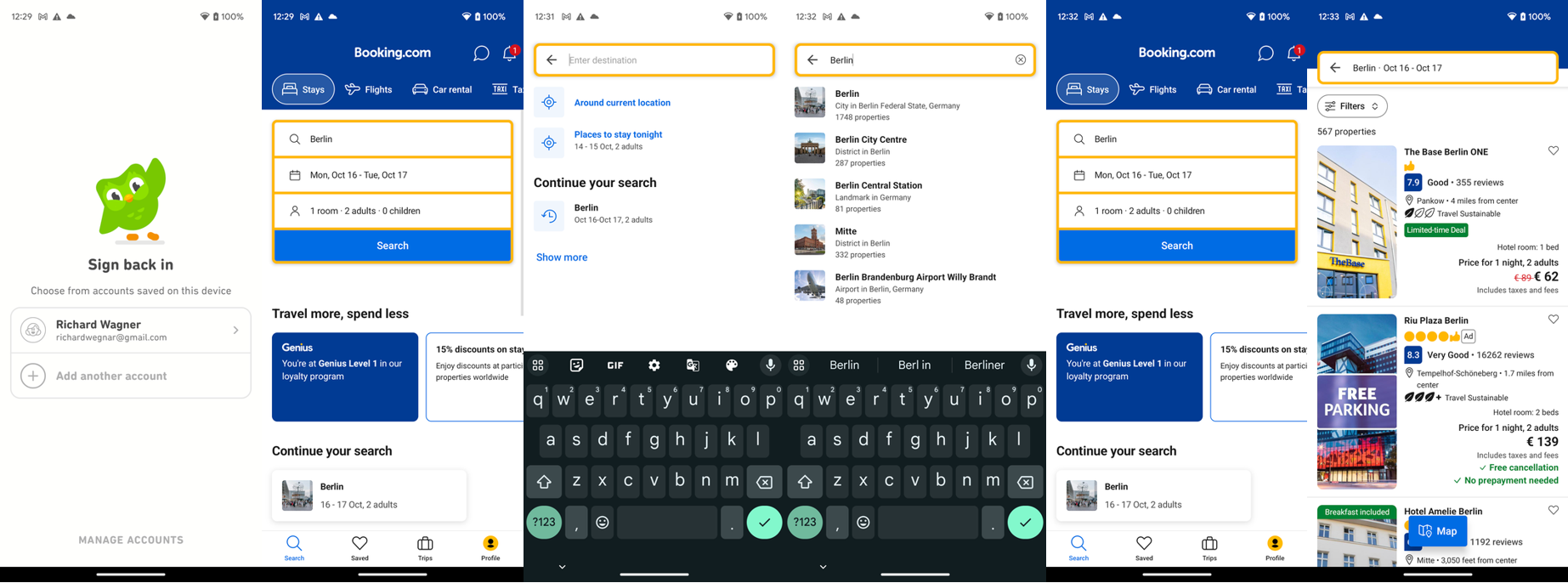} 
  \caption{\textbf{Trajectory Visualization: Hotel Search on Booking.com.} 
  The agent searches for accommodation in a specific location: 
  (1) Launch the application via \texttt{Open(Booking.com)}; 
  (2) Activate search via \texttt{Click(0.500, 0.239)}; 
  (3) Input the destination via \texttt{Input\_text(``Berlin'')}; 
  (4) Select the suggestion via \texttt{Click(0.578, 0.181)}; 
  (5) View results via \texttt{Click(0.500, 0.422)}; 
  (6) Terminate the task via \texttt{Finish()}.}
  \label{fig:task4_ex1}
\end{figure*}

\begin{figure*}[htbp]
  \centering
  \includegraphics[height=5cm, keepaspectratio]{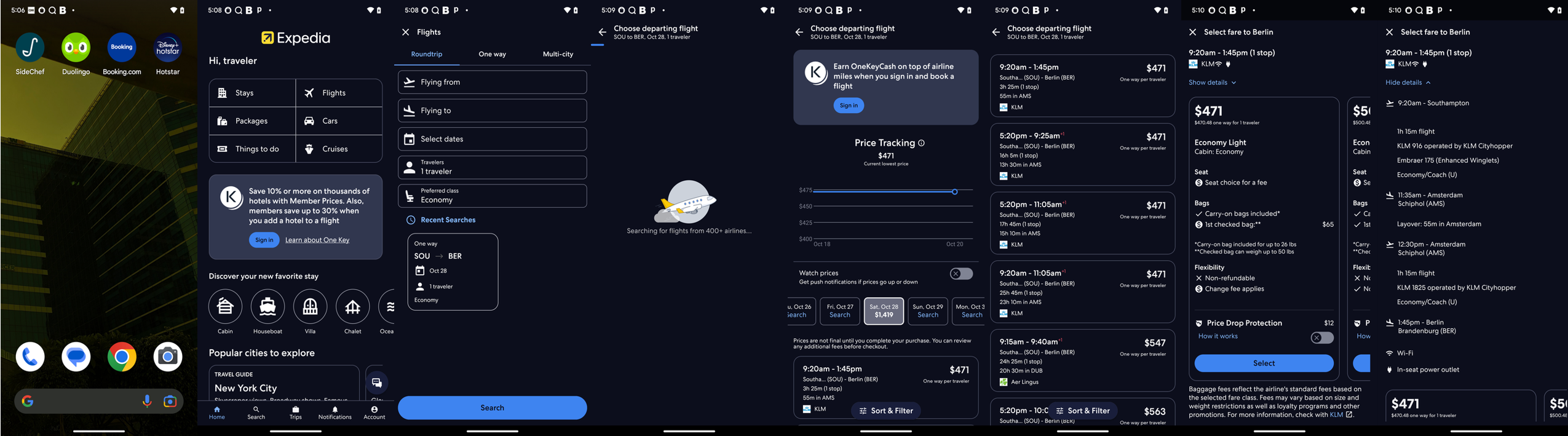} 
  \caption{\textbf{Trajectory Visualization: Flight Details on Expedia.} 
  The agent retrieves specific flight information: 
  (1) Launch the application via \texttt{Open(Expedia)}; 
  (2) Select flights via \texttt{Click(0.808, 0.203)}; 
  (3) Access recent searches via \texttt{Click(0.309, 0.586)}; 
  (4) \texttt{Wait()} for data retrieval; 
  (5) Browse results via \texttt{Scroll(down)}; 
  (6) Select the airline via \texttt{Click(0.500, 0.196)}; 
  (7) View detailed itinerary via \texttt{Click(0.137, 0.188)}; 
  (8) Terminate the task via \texttt{Finish()}.}
  \label{fig:task4_ex2}
\end{figure*}

\begin{figure*}[htbp]
  \centering
  \includegraphics[height=5cm, keepaspectratio]{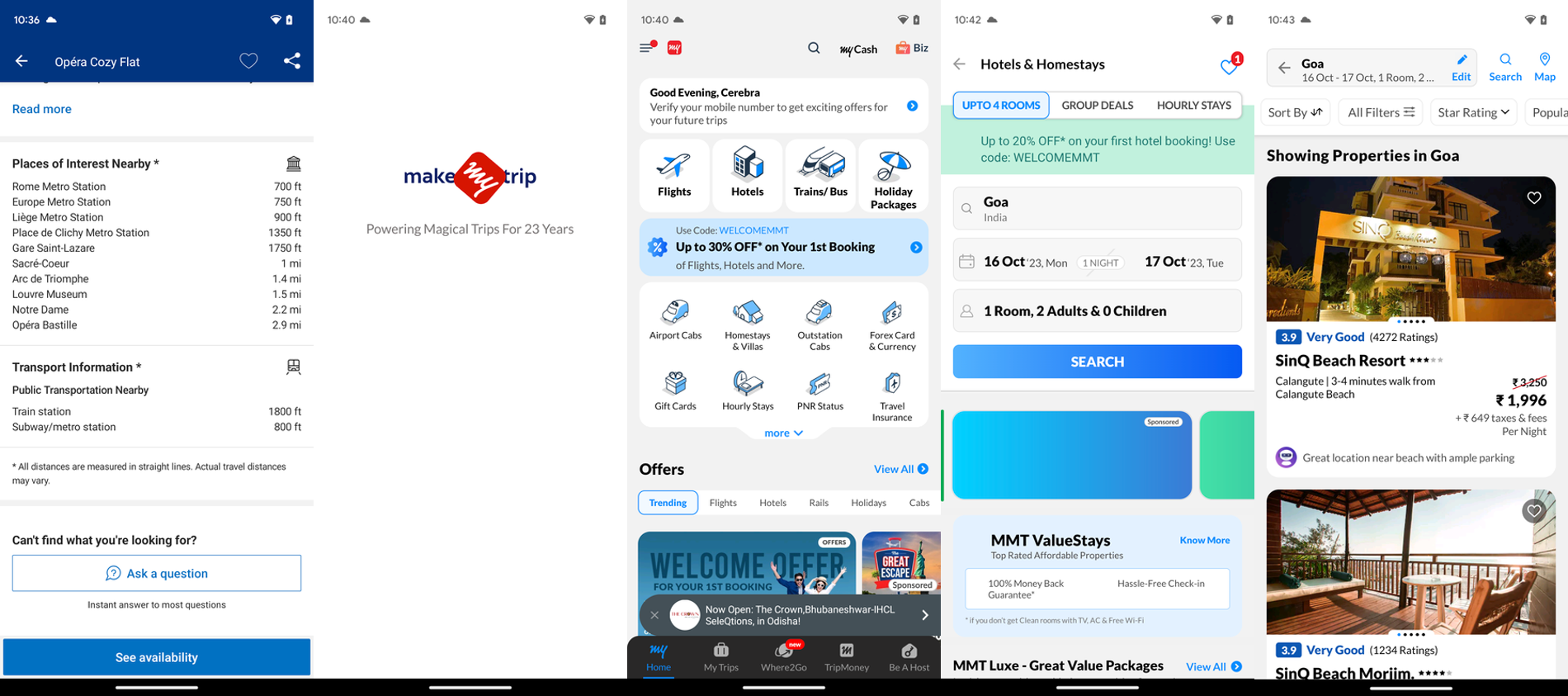} 
  \caption{\textbf{Trajectory Visualization: Hotel Inquiry on MakeMyTrip.} 
  The agent initiates a hotel search process: 
  (1) Launch the application via \texttt{Open(MakeMyTrip)}; 
  (2) \texttt{Wait()} for the interface to load; 
  (3) Select the hotel category via \texttt{Click(0.383, 0.252)}; 
  (4) Initiate the search via \texttt{Click(0.500, 0.518)}; 
  (5) Terminate the task via \texttt{Finish()}.}
  \label{fig:task4_ex3}
\end{figure*}

\begin{figure*}[htbp]
  \centering
  \includegraphics[height=5.5cm, keepaspectratio]{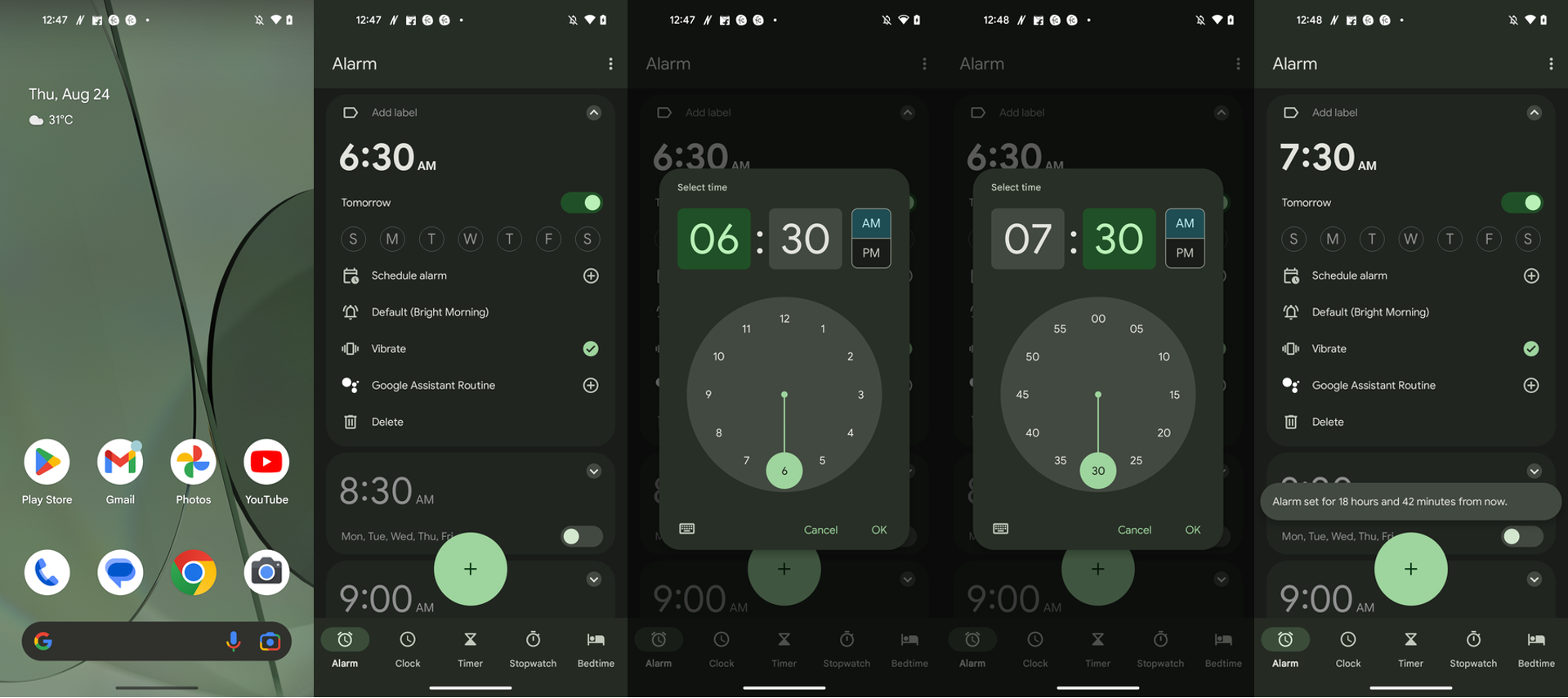} 
  \caption{\textbf{Trajectory Visualization: Alarm Adjustment on Clock.} 
  The agent modifies an existing alarm setting: 
  (1) Launch the application via \texttt{Open(Clock)}; 
  (2) Select the target alarm via \texttt{Click(0.233, 0.226)}; 
  (3) Adjust the hour setting via \texttt{Click(0.379, 0.660)}; 
  (4) Confirm the modification via \texttt{Click(0.801, 0.760)}; 
  (5) Terminate the task via \texttt{Finish()}.}
  \label{fig:task5_ex1}
\end{figure*}

\begin{figure*}[htbp]
  \centering
  \includegraphics[height=5.5cm, keepaspectratio]{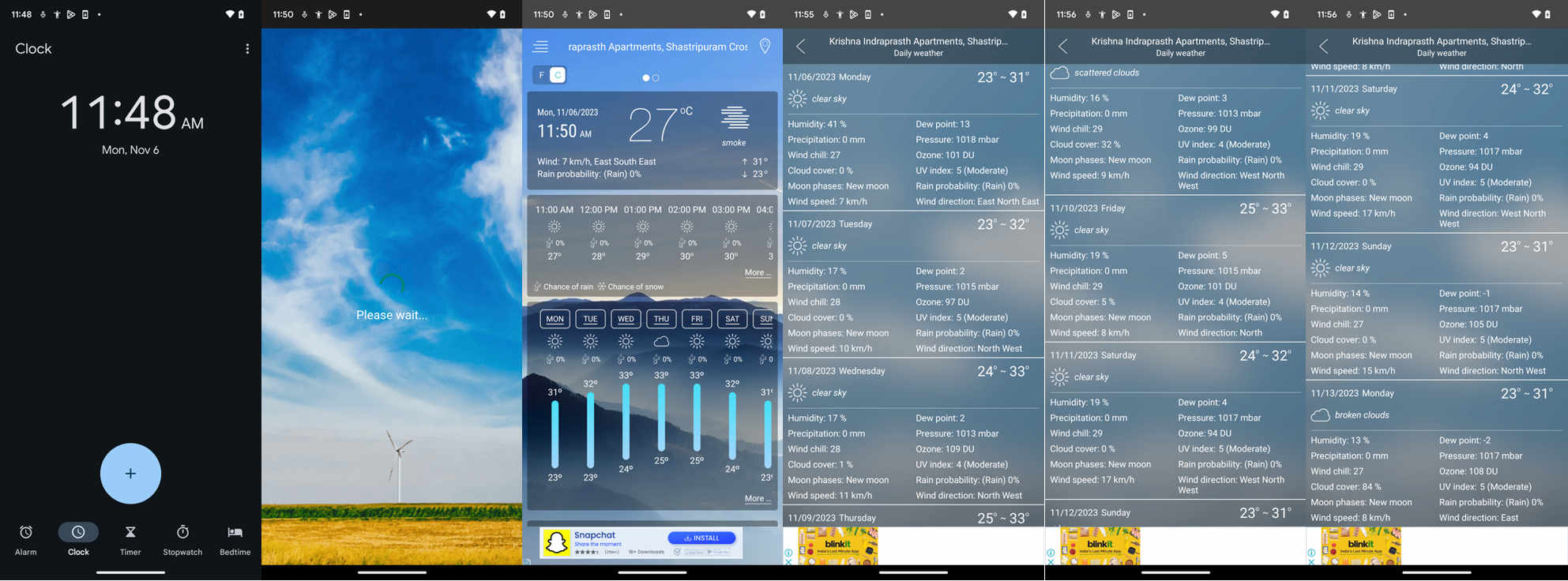} 
  \caption{\textbf{Trajectory Visualization: Forecast Retrieval on Weather.} 
  The agent navigates to the extended forecast view: 
  (1) Launch the application via \texttt{Open(Weather)}; 
  (2) \texttt{Wait()} for data synchronization; 
  (3) Expand the forecast view via \texttt{Click(0.905, 0.858)}; 
  (4) Browse the weekly data via \texttt{Scroll(down)}; 
  (5) Continue browsing via \texttt{Scroll(down)}; 
  (6) Terminate the task via \texttt{Finish()}.}
  \label{fig:task5_ex2}
\end{figure*}

\begin{figure*}[htbp]
  \centering
  \includegraphics[height=5.5cm, keepaspectratio]{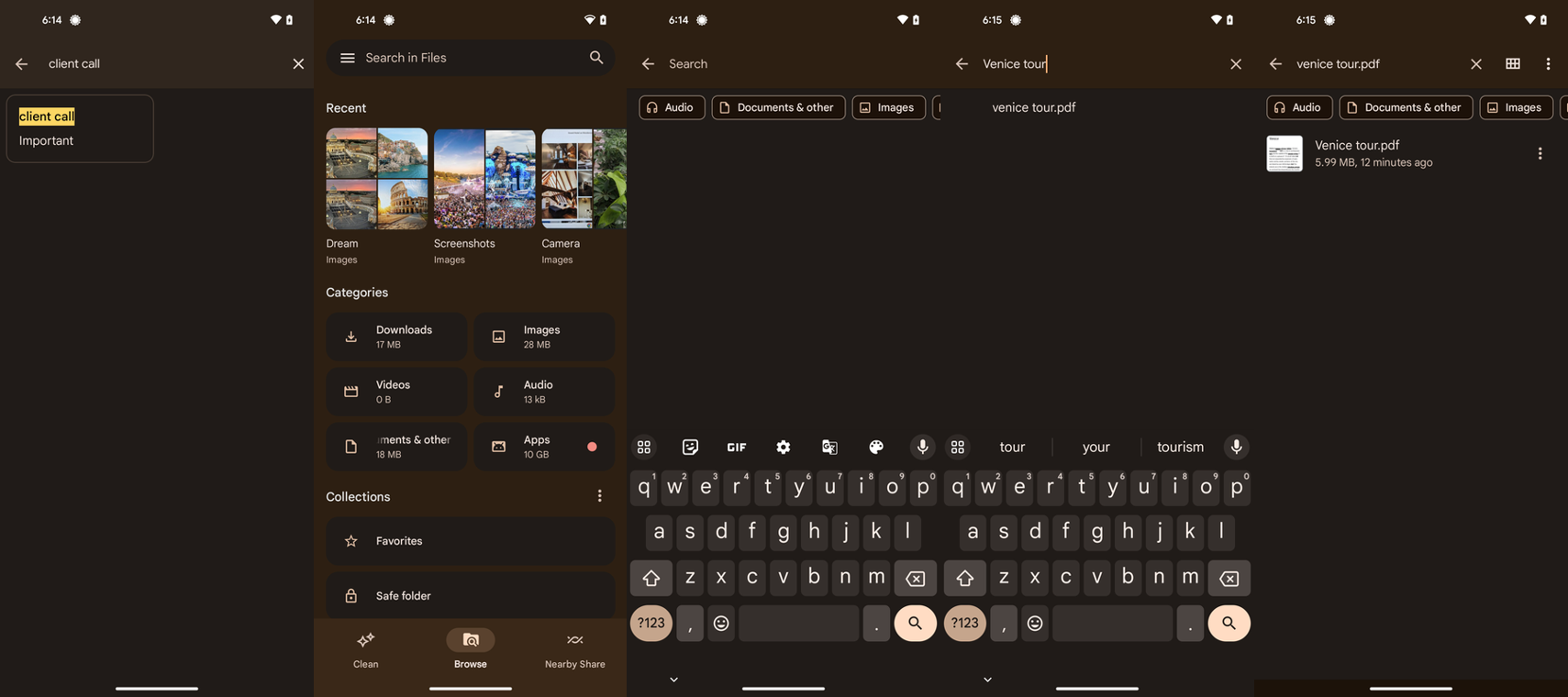} 
  \caption{\textbf{Trajectory Visualization: Document Retrieval on Google Files.} 
  The agent locates a specific file: 
  (1) Launch the application via \texttt{Open(files)}; 
  (2) Activate the search bar via \texttt{Click(0.902, 0.083)}; 
  (3) Input the filename via \texttt{Input\_text(``Venice tour'')}; 
  (4) Open the file via \texttt{Click(0.524, 0.154)}; 
  (5) Terminate the task via \texttt{Finish()}.}
  \label{fig:task5_ex3}
\end{figure*}

\begin{figure*}[htbp]
  \centering
  \includegraphics[height=5.5cm, keepaspectratio]{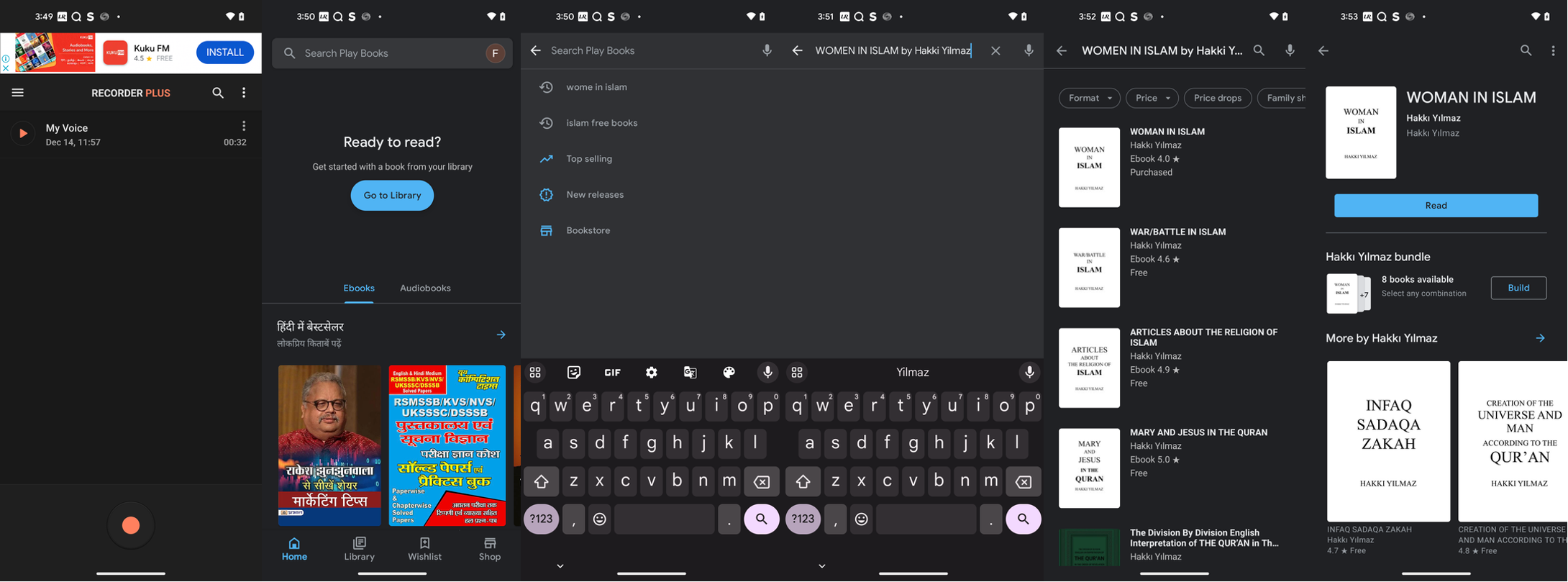} 
  \caption{\textbf{Trajectory Visualization: Book Search on Google Play Books.} 
  The agent searches for and selects a specific title: 
  (1) Launch the application via \texttt{Open(google play books)}; 
  (2) Focus on the search bar via \texttt{Click(0.324, 0.091)}; 
  (3) Enter the book title via \texttt{Input\_text(``WOMEN IN ISLAM by...'')}; 
  (4) Execute search via \texttt{Click(0.920, 0.899)}; 
  (5) Select the first result via \texttt{Click(0.500, 0.288)}; 
  (6) Terminate the task via \texttt{Finish()}.}
  \label{fig:task6_ex1}
\end{figure*}

\begin{figure*}[htbp]
  \centering
  \includegraphics[height=5.5cm, keepaspectratio]{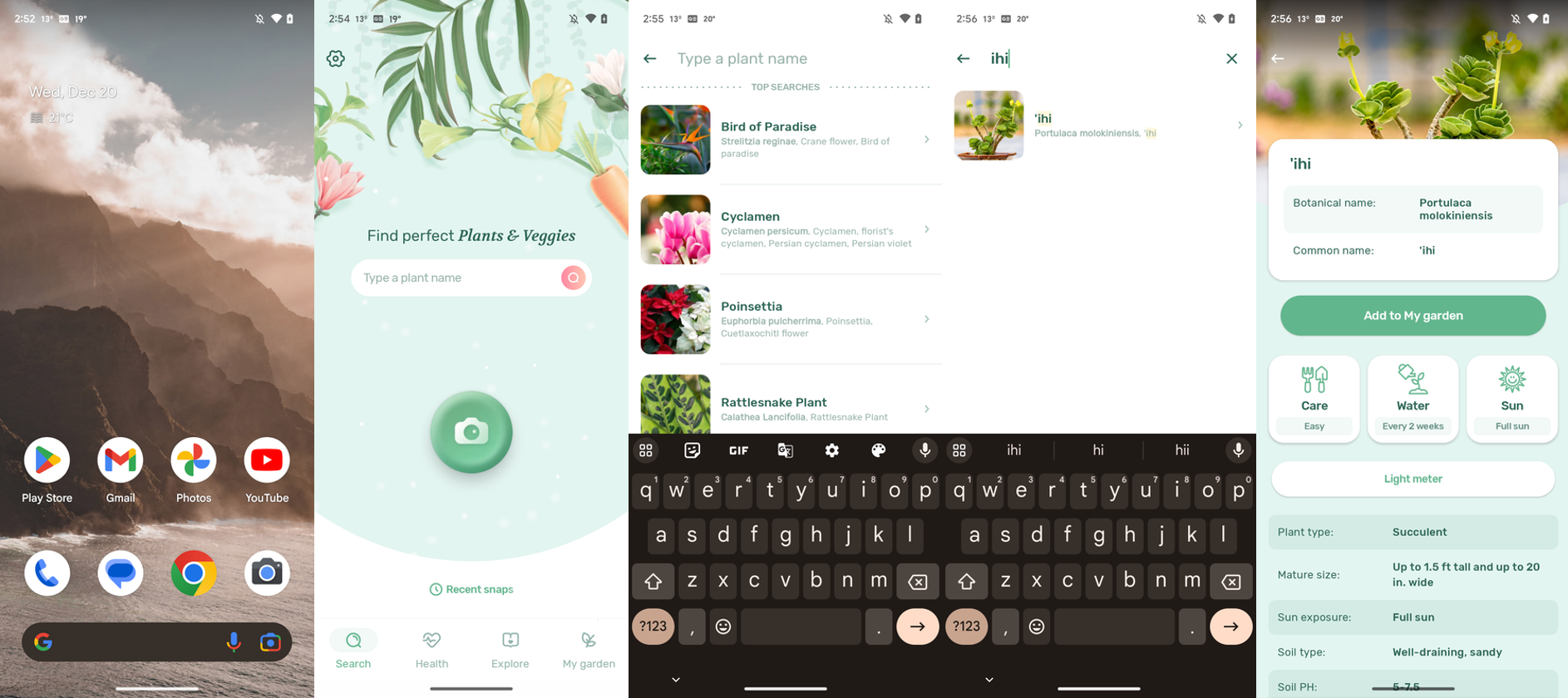} 
  \caption{\textbf{Trajectory Visualization: Plant Identification on Blossom.} 
  The agent searches for botanical information: 
  (1) Launch the application via \texttt{Open(Blossom)}; 
  (2) Activate the search bar via \texttt{Click(0.500, 0.398)}; 
  (3) Input the plant name via \texttt{Input\_text(``ihi'')}; 
  (4) Select the entry via \texttt{Click(0.225, 0.177)}; 
  (5) Terminate the task via \texttt{Finish()}.}
  \label{fig:task6_ex2}
\end{figure*}

\begin{figure*}[htbp]
  \centering
  \includegraphics[height=5.5cm, keepaspectratio]{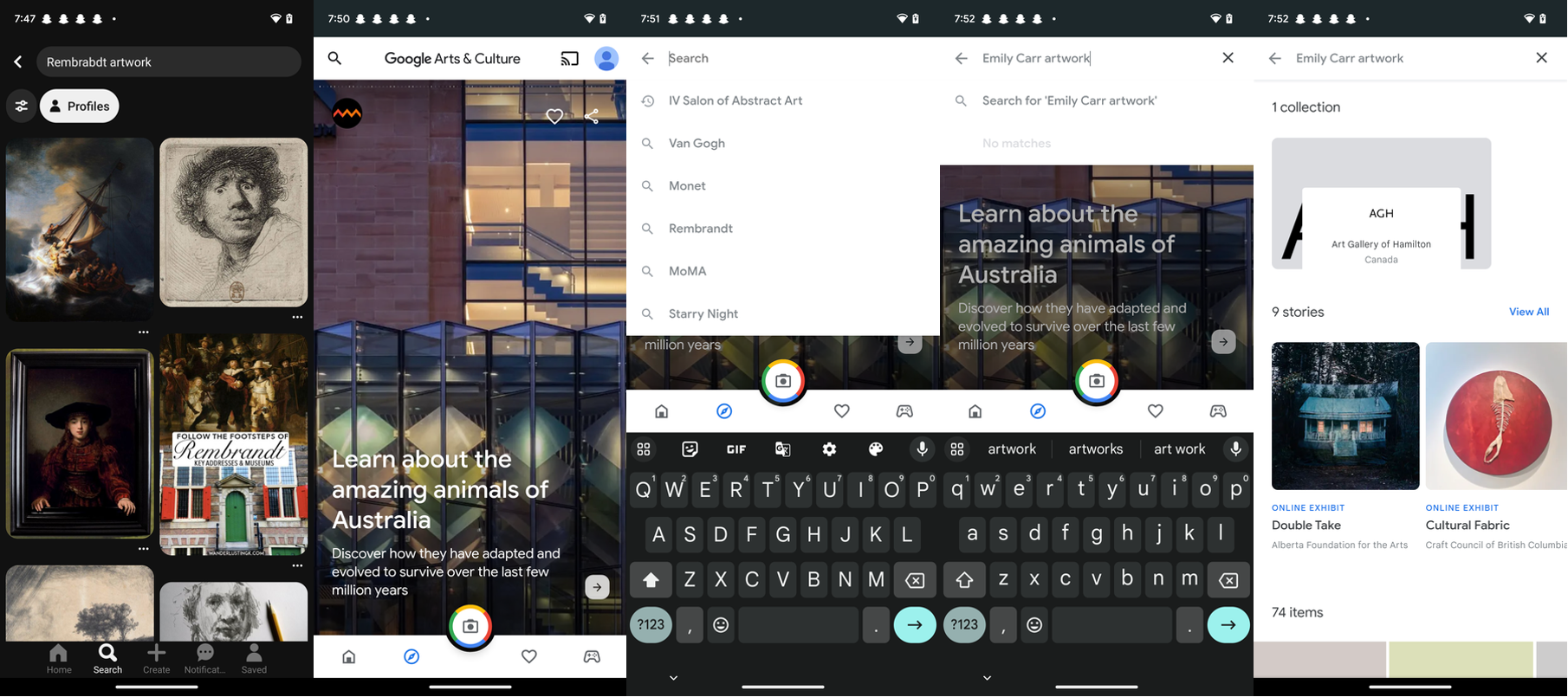} 
  \caption{\textbf{Trajectory Visualization: Artwork Search on Arts \& Culture.} 
  The agent retrieves specific artist works: 
  (1) Launch the application via \texttt{Open(Arts \& Culture)}; 
  (2) Click the search icon via \texttt{Click(0.067, 0.083)}; 
  (3) Input the query via \texttt{Input\_text(``Emily Carr artwork'')}; 
  (4) Confirm the search via \texttt{Click(0.920, 0.904)}; 
  (5) Terminate the task via \texttt{Finish()}.}
  \label{fig:task6_ex3}
\end{figure*}

\begin{figure*}[htbp]
  \centering
  \includegraphics[height=5.5cm, keepaspectratio]{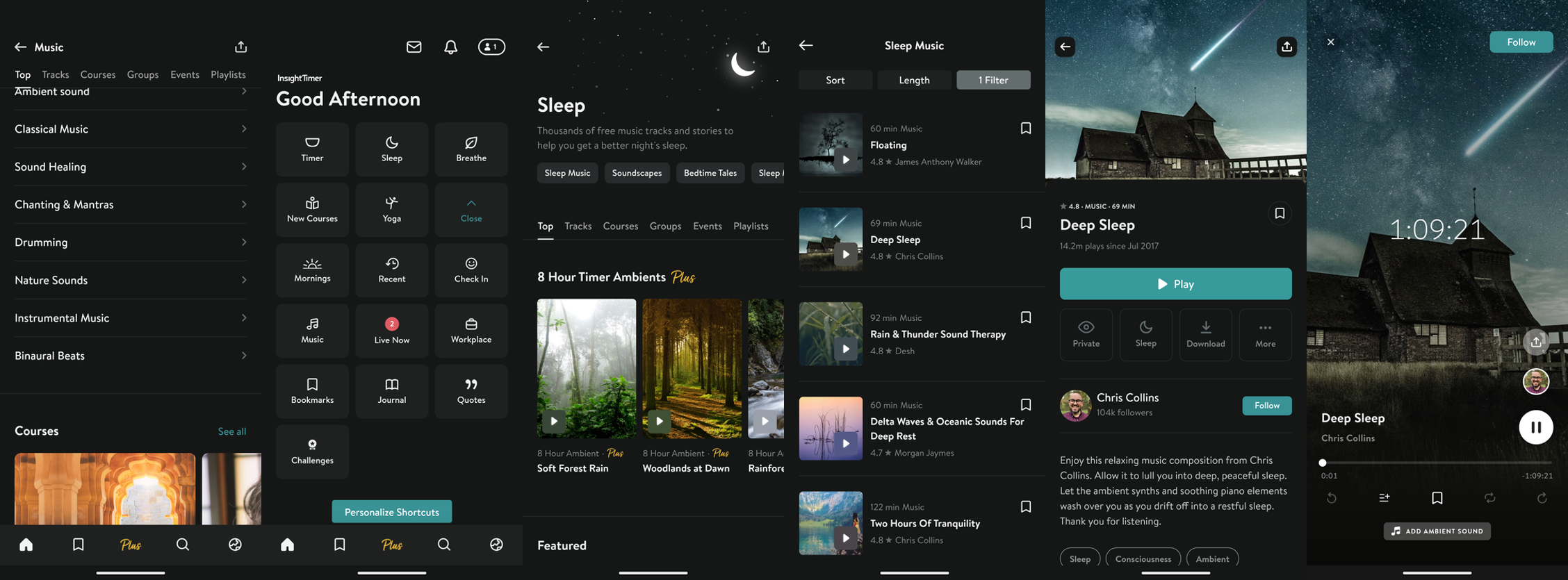} 
  \caption{\textbf{Trajectory Visualization: Audio Playback on Insight Timer.} 
  The agent navigates to and plays specific sleep content: 
  (1) Navigate back via \texttt{Back()}; 
  (2) Open the Sleep tab via \texttt{Click(0.499, 0.258)}; 
  (3) Select the music category via \texttt{Click(0.172, 0.297)}; 
  (4) Select the track ``Deep sleep'' via \texttt{Click(0.500, 0.412)}; 
  (5) Activate playback via \texttt{Click(0.499, 0.489)}; 
  (6) Terminate the task via \texttt{Finish()}.}
  \label{fig:task7_ex1}
\end{figure*}

\begin{figure*}[htbp]
  \centering
  \includegraphics[height=5.5cm, keepaspectratio]{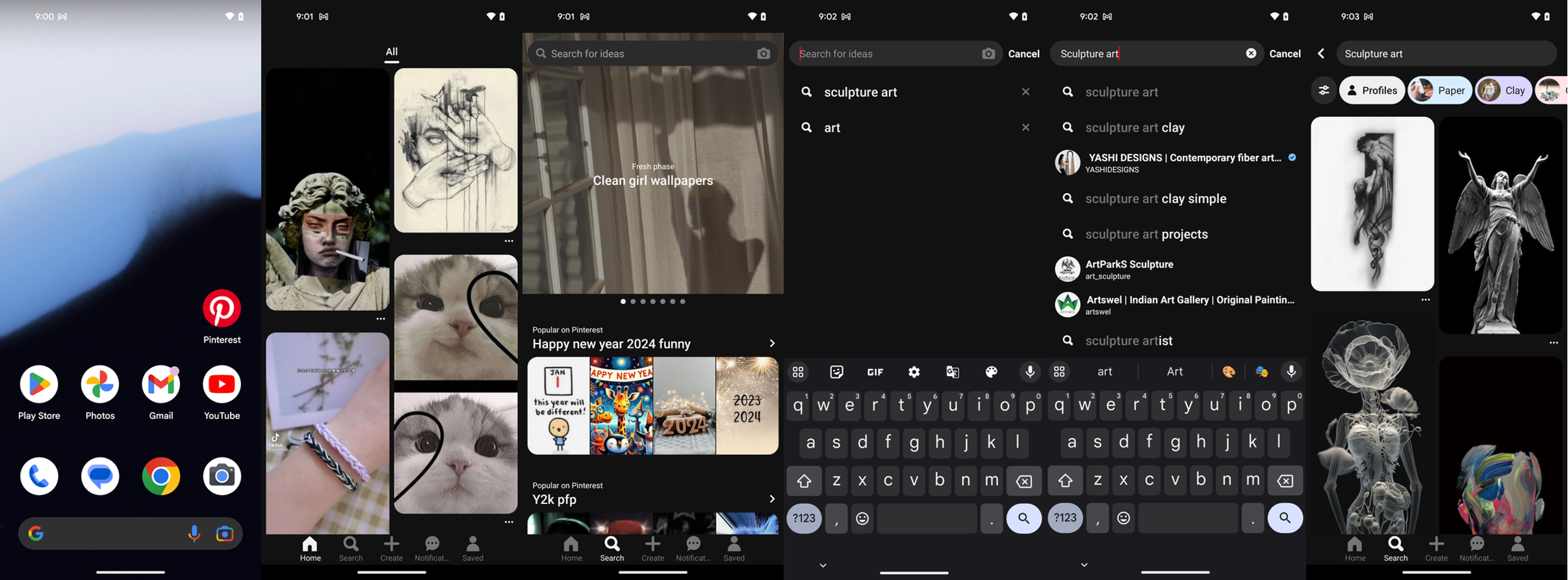} 
  \caption{\textbf{Trajectory Visualization: Content Discovery on Pinterest.} 
  The agent searches for a specific art category: 
  (1) Launch the application via \texttt{Open(Pinterest)}; 
  (2) Navigate to the search tab via \texttt{Click(0.344, 0.947)}; 
  (3) Activate the search bar via \texttt{Click(0.494, 0.092)}; 
  (4) Input the query via \texttt{Input\_text(``Sculpture art'')}; 
  (5) Execute search via \texttt{Click(0.920, 0.899)}; 
  (6) Terminate the task via \texttt{Finish()}.}
  \label{fig:task7_ex2}
\end{figure*}

\begin{figure*}[htbp]
  \centering
  \includegraphics[height=5.5cm, keepaspectratio]{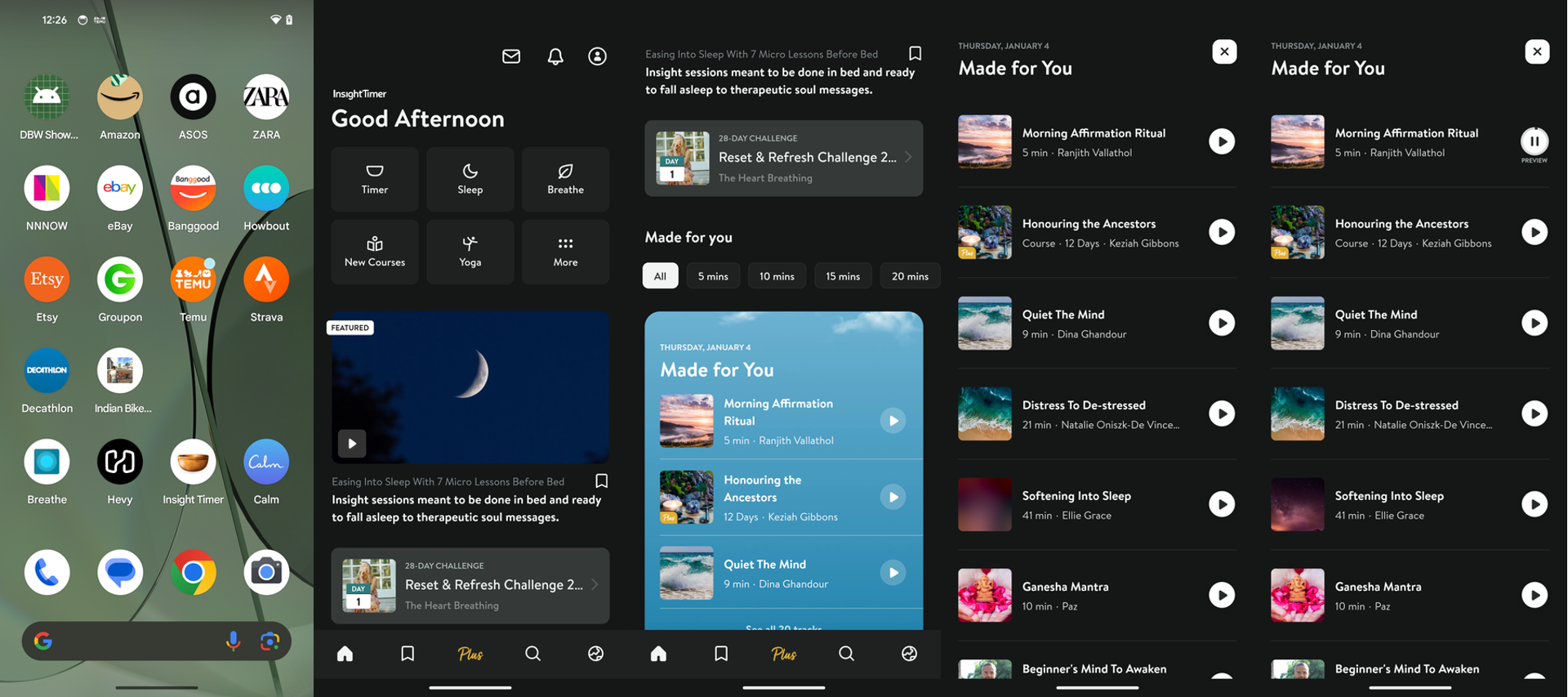} 
  \caption{\textbf{Trajectory Visualization: Guided Meditation on Insight Timer.} 
  The agent selects a morning ritual: 
  (1) Launch the application via \texttt{Open(Insight Timer)}; 
  (2) Browse content via \texttt{Scroll(down)}; 
  (3) Select the ``Morning Affirmation'' podcast via \texttt{Click(0.499, 0.603)}; 
  (4) Initiate playback via \texttt{Click(0.500, 0.203)}; 
  (5) Terminate the task via \texttt{Finish()}.}
  \label{fig:task7_ex3}
\end{figure*}

\end{document}